%File: formatting-instructions-latex-2023.tex
%release 2023.0
\documentclass[letterpaper]{article} % DO NOT CHANGE THIS
\usepackage{aaai23}  % DO NOT CHANGE THIS
\usepackage{times}  % DO NOT CHANGE THIS
\usepackage{helvet}  % DO NOT CHANGE THIS
\usepackage{courier}  % DO NOT CHANGE THIS
\usepackage[hyphens]{url}  % DO NOT CHANGE THIS
\usepackage{graphicx} % DO NOT CHANGE THIS
\urlstyle{rm} % DO NOT CHANGE THIS
  % DO NOT CHANGE THIS
\usepackage{natbib}  % DO NOT CHANGE THIS AND DO NOT ADD ANY OPTIONS TO IT
\usepackage{caption} % DO NOT CHANGE THIS AND DO NOT ADD ANY OPTIONS TO IT
\frenchspacing  % DO NOT CHANGE THIS
\setlength{\pdfpagewidth}{8.5in}  % DO NOT CHANGE THIS
\setlength{\pdfpageheight}{11in}  % DO NOT CHANGE THIS
%
% These are recommended to typeset algorithms but not required. See the subsubsection on algorithms. Remove them if you don't have algorithms in your paper.
\usepackage{algorithm}
\usepackage{algorithmic}

\usepackage{amsmath}
\usepackage{amssymb}
\usepackage{mathtools}
\usepackage{amsthm}
\usepackage{xspace}
\usepackage{graphicx}
\usepackage{subfigure}
\usepackage{wrapfig}
\usepackage{graphicx}
\usepackage{subfigure}
\usepackage{url}
\usepackage{color}
\usepackage{multirow}
\usepackage{gensymb}
\usepackage{longtable}
\usepackage{tikz}
\usepackage{comment}
\usepackage{dsfont}
\usepackage{framed}
\usepackage{enumitem}
\usepackage{xspace}
\usepackage{bm}

\usepackage{multibib}
% \newcites{newcite}{Reference}

% % if you use cleveref..
% \usepackage[capitalize,noabbrev]{cleveref}

\theoremstyle{plain}
\newtheorem{theorem}{Theorem}[section]
\newtheorem{proposition}[theorem]{Proposition}
\newtheorem{lemma}[theorem]{Lemma}

\theoremstyle{definition}
\newtheorem{definition}{Definition}

\theoremstyle{remark}
\newtheorem{remark}{Remark}

\newcommand{\ouralg}{{\texttt{LAAU}}\xspace}

\newcommand{\pone}{{{OOBC}}\xspace}
\newcommand{\avg}{{{AVG-LT}}\xspace}
\newcommand{\gradient}{{{DGD}}\xspace}
\newcommand{\mirror}{{{MW}}\xspace}
\newcommand{\rl}{{{RL}}\xspace}

%
% These are are recommended to typeset listings but not required. See the subsubsection on listing. Remove this block if you don't have listings in your paper.
\usepackage{newfloat}
\usepackage{listings}
\DeclareCaptionStyle{ruled}{labelfont=normalfont,labelsep=colon,strut=off} % DO NOT CHANGE THIS
\lstset{%
	basicstyle={\footnotesize\ttfamily},% footnotesize acceptable for monospace
	numbers=left,numberstyle=\footnotesize,xleftmargin=2em,% show line numbers, remove this entire line if you don't want the numbers.
	aboveskip=0pt,belowskip=0pt,%
	showstringspaces=false,tabsize=2,breaklines=true}
\floatstyle{ruled}
\newfloat{listing}{tb}{lst}{}
\floatname{listing}{Listing}
%
% Keep the \pdfinfo as shown here. There's no need
% for you to add the /Title and /Author tags.
\pdfinfo{
/TemplateVersion (2023.1)
}

\setcounter{secnumdepth}{2} %May be changed to 1 or 2 if section numbers are desired.

% The file aaai23.sty is the style file for AAAI Press
% proceedings, working notes, and technical reports.
%

% Title

% Your title must be in mixed case, not sentence case.
% That means all verbs (including short verbs like be, is, using,and go),
% nouns, adverbs, adjectives should be capitalized, including both words in hyphenated terms, while
% articles, conjunctions, and prepositions are lower case unless they
% directly follow a colon or long dash
\title{Learning-Assisted Algorithm Unrolling for Online Optimization with Budget Constraints}
\author{
%     %Authors
%     % All authors must be in the same font size and format.
%     % Written by AAAI Press Staff\textsuperscript{\rm 1}\thanks{With help from the AAAI Publications Committee.}\\
%     % AAAI Style Contributions by Pater Patel Schneider,
%     % Sunil Issar,\\
    Jianyi Yang and
    Shaolei Ren
}
\affiliations{
    %Afiliations
    % \textsuperscript{\rm 1}Association for the Advancement of Artificial Intelligence\\
    University of California, Riverside\\
    \{jyang239, shaolei\}@ucr.edu
    % If you have multiple authors and multiple affiliations
    % use superscripts in text and roman font to identify them.
    % For example,

    % Sunil Issar, \textsuperscript{\rm 2}
    % J. Scott Penberthy, \textsuperscript{\rm 3}
    % George Ferguson,\textsuperscript{\rm 4}
    % Hans Guesgen, \textsuperscript{\rm 5}.
    % Note that the comma should be placed BEFORE the superscript for optimum readability

    % 1900 Embarcadero Road, Suite 101\\
    % Palo Alto, California 94303-3310 USA\\
    % email address must be in roman text type, not monospace or sans serif
    % publications23@aaai.org
%
% See more examples next
}

%Example, Single Author, ->> remove \iffalse,\fi and place them surrounding AAAI title to use it
% \iffalse
% \title{My Publication Title --- Single Author}
% \author {
%     Author Name
% }
% \affiliations{
%     Affiliation\\
%     Affiliation Line 2\\
%     name@example.com
% }
% \fi

% \iffalse
%Example, Multiple Authors, ->> remove \iffalse,\fi and place them surrounding AAAI title to use it
% \title{Learning-Assisted Algorithm Unrolling for Online Optimization with Budget Constraints}
% \author {
%     % Authors
%     % Jianyi Yang,\textsuperscript{\rm 1,\rm 2}
%     % Shaolei Ren, \textsuperscript{\rm 2}
% }
% \affiliations {
%     % Affiliations
%     \textsuperscript{\rm 1} Affiliation 1\\
%     \textsuperscript{\rm 2} Affiliation 2\\
%     firstAuthor@affiliation1.com, secondAuthor@affilation2.com, thirdAuthor@affiliation1.com
% }
% \fi

% REMOVE THIS: bibentry
% This is only needed to show inline citations in the guidelines document. You should not need it and can safely delete it.
\usepackage{bibentry}
% END REMOVE bibentry

\begin{document}

\maketitle

\begin{abstract}
	Online optimization with multiple budget constraints 
   is challenging since the online decisions over a short time horizon  are coupled together by strict inventory constraints. The existing manually-designed algorithms cannot achieve satisfactory average performance for 
	this setting because they often need a large number of time steps
	for convergence and/or may violate the inventory constraints. 
 In this paper, we propose a new machine learning (ML) assisted
	unrolling approach, called \ouralg (Learning-Assisted
	Algorithm Unrolling), which
	unrolls the agent's online decision pipeline
	and
	leverages an ML model for updating the Lagrangian multiplier
	online. 
	For efficient training via backpropagation, we derive gradients
	of the decision pipeline over time.
	We also provide the average cost bounds for two cases when training
	data is available offline and collected online, respectively. Finally,
	 we present numerical results to highlight that \ouralg can 
	 outperform the existing
	baselines.
\end{abstract}

\section{Introduction}

Online optimization with budget (or inventory) constraints,
also referred to as \pone,
 is an important problem modeling a wide range
of  
sequential decision-making applications with limited resources,
such as online virtual machine resource allocation \cite{MungChiang_MultiResource_Fairness_ToN_Joe-Wong:2013:MAF:2578970.2578979,PalomarChiang_Distributed}, 
one-way trading  in economics \cite{one_way_trading_el2001optimal},
resource management in wireless networks \cite{Neely_Booklet,MungChiang_Fairness_Infocom_2010_lan2010axiomatic},
and
data center server provisioning \cite{resource_fairness_ghodsi2011dominant}. 
More specifically, when virtualizing a physical server into
 a small number of 
virtual machines (VMs) to satisfy
the demand of multiple sequentially-arriving jobs, the agent must
make sure that the total VM resource consumption is no more
than what the physical server can provide \cite{VMware_DPM_Whitepaper};
additional examples can be found in Appendix~\ref{sec:example}.

In an \pone problem, online actions are selected sequentially to 
 maximize the total utility
over a {short} time horizon  while 
the resource consumption  over the time horizon is 
strictly constrained by a fixed amount of budgets (i.e., violating
the budget constraint is naturally prohibited due to physical constraints). 
Consequently, the \emph{short} time horizon (e.g., 24 hourly
decisions in a day) and the \emph{strict} budget
constraint present substantial algorithmic challenges --- 
the optimal solution relies on complete offline context information, but in the online setting, only the online contexts are revealed and the exact future contexts are unavailable for decision making \cite{CR_persuit_lin2019competitive,OOIC-M_lin2022competitive}.

A relevant but different problem is online optimization with (long-term) constraints \cite{OCO_ODMD_balseiro2020dual,OCO_feldman2010online,Neely_Booklet,online_convex_paking_covering_azar2016online}. 
In the literature, a common approach  is to relax
the long-term capacity constraints and include them as
additional weighted costs into the  original optimization
objective, i.e., Lagrangian relaxation \cite{devanur2019near,OCO_zinkevich2003online,OCO_ODMD_balseiro2020dual,Neely_Booklet}.
 The Lagrangian multiplier can be interpreted as the resource \emph{price} \cite{PalomarChiang_Distributed}, and is updated at each time step by
 a manually-designed algorithm such as Dual Mirror Descent (DMD) \cite{OCO_ODMD_balseiro2020dual,online_DMD_wei2020online,OCO_jiang2020online}. 
 These algorithms require a sufficiently long time horizon for convergence, which hence may not provide satisfactory performance for short-term budget constraints, especially when contexts in an episode are not identically independently distributed (i.i.d.).
 Additionally, some studies consider constraints on average
 (i.e., equivalently, long-term constraints) \cite{Lyapunov_optimization_energy_harvesting_wireless_sensor_qiu2018lyapunov,Online_cost_aware_buffer_management_video_streaming_he2013cbm} or bound the violation of the constraints \cite{Neely_Booklet, Neely_LowComplexity_Lyapunov_JMLR_2020_JMLR:v21:16-494,online_incentive_mechanism_demand_response_colocation_datacenter_sun2016online}.
 Thus, they do not apply to \emph{strict} budget constraints
 over a short time horizon.

The challenges of \pone with short-term and strict budget constraints can be further highlighted by 
that competitive online algorithms have only been proposed very recently under settings with linear constraints
\cite{CR_persuit_lin2019competitive,OOIC-M_lin2022competitive}.
Concretely, CR-Pursuit algorithms are proposed
to make actions by following a pseudo-optimal algorithm based
on the competitive ratio pursuit framework.
Nonetheless, to make sure the solution exists for each \pone
 episode, the guaranteed competitive ratio (ratio between the algorithm cost and the offline-optimal cost) can be very large. Also, they treat each \pone problem instance as a
completely new one and focus on the worst-case competitive ratio without
considering the available historical data obtained when solving
previous \pone episodes. Thus, their conservative nature
does not result in a satisfactory average performance, which limits the practicability of these algorithms.

By tapping into the power of historical data, a natural idea for \pone
is to train an machine learning (ML) based optimizer.
Indeed, reinforcement learning has been proposed
to solve online allocation problems  in other contexts
\cite{L2O_NewDog_OldTrick_Google_ICLR_2019,L2O_OnlineBipartiteMatching_Toronto_ArXiv_2021_DBLP:journals/corr/abs-2109-10380,machine_learning_cloud_du2019learning}. But,
the existing ML-based algorithms for online optimization typically learn
 online actions in an end-to-end manner without exploiting
 the structure of the online problem being studied, which
 hence can have an unnecessarily high learning complexity
and create additional challenges for generalization to unseen problem instances
 \cite{L2O_chen2021learning,Deep_unrolling_liu2019deep}.

\textbf{Contribution.} We study
\pone with short-term and strict budget constraints,
and propose a novel ML-assisted
unrolling approach based on recurrent architectures, called \ouralg (Learning-Assisted
Algorithm Unrolling). 
Instead of using an end-to-end ML model to directly learn online actions, \ouralg uniquely exploits the \ouralg problem structure
and unrolls the agent's online decision pipeline into decision pipeline with
three stages/layers --- update the Lagrangian multiplier,
optimize decisions subject to constraints, and update remaining resource budgets
--- and only plugs an ML model into the first stage (i.e., update the Lagrangian multiplier)
where the key bottleneck for better performance exists. Thus, compared with the end-to-end model, \ouralg benefits generalization by exploiting the knowledge of decision pipeline \cite{L2O_chen2021learning}. Moreover, 
when the action dimension is larger than the number of constraints
(i.e., the dimension of Lagrangian multipliers),
the complexity advantage of using \ouralg 
to learn Lagrangian multipliers can be further enhanced compared to learning
the actions using an end-to-end model.

It is  challenging to train \ouralg through backpropagation
since the constrained optimization layer is not easily differentiable. 
Thus, we derive tractable gradients  for back-propagation through
 the optimization layer based on Karush-Kuhn-Tucker (KKT) conditions.
 In addition, 
we rigorously analyze the performance of \ouralg in
terms of the expected cost for both the case when the offline distribution information is available and the case when the data is collected online.
Finally, to validate 
\ouralg,
 we present numerical results
by considering  online resource
allocation for maximizing the weighted fairness metric.
Our results highlight that \ouralg can significantly outperform the existing
baselines
and is very close to the optimal oracle in terms
of the fairness utility.

\section{Related Works}\label{sec:related_work}

\textbf{Constrained online optimization.}
Some earlier works \cite{OCO_devanur2009adwords,OCO_feldman2010online} solve  online optimization with (long-term) constraints by estimating a fixed Lagrangian multiplier using offline data. This approach  
works only for long-term or average constraints.
 Many other studies design online algorithms
by updating the Lagrangian multiplier in an online style \cite{devanur2019near,OCO_ODMD_balseiro2020dual,online_DMD_wei2020online,OCO_jiang2020online}.  
 These algorithms guarantee sub-linear regrets under the i.i.d. context setting, and thus can achieve high utility if the number of time steps is sufficiently large.
Likewise, the Lyapunov optimization approach addresses the long-term
packing constraints by introducing virtual queues (equivalent to the Lagrangian multiplier) 
\cite{Neely_Booklet,Neely_LowComplexity_Lyapunov_JMLR_2020_JMLR:v21:16-494,Longbo_PowerOnlineLearningLyapuvno_Sigmetrics_2014_10.1145/2637364.2591990}.
Nonetheless, it also requires a sufficiently large number of time steps
for convergence.
By contrast, we consider online optimization with short-term strict budget constraints, which, motivated
by practical applications, makes \pone significantly more challenging.

Our work is relevant to the studies on \pone \cite{CR_persuit_lin2019competitive,OOIC-M_lin2022competitive} which design 
online algorithms to achieve a worst-case performance guarantee.
However, to guarantee the worst-case performance and the feasibility of the algorithm, the algorithms are very conservative and their average performances are unsatisfactory.
 Comparably, we consider a more general setting where the budget constraint can be nonlinear, and utilize available historical data more efficiently 
 to design ML-based \ouralg that unrolls the online decision pipeline and
 achieves favorable average performance.

\textbf{Algorithm unrolling.}
\ouralg is  related to the recent studies
on ML-assisted
 algorithm unrolling and deep implicit layers, which integrate
 ML into traditional algorithmic frameworks for better generalization and interpretability, lower sampling complexity and/or smaller ML model size
\cite{adler2018learned,L2O_chen2021learning,DNN_ImplicitLayers_Zico_Website,unrolling_monga2021algorithm,Deep_unrolling_liu2019deep}.
Algorithm unrolling has been used for sparse coding
 \cite{unrolling_gregor2010learning,sprechmann2015learning},
 signal and image processing \cite{unrolling_monga2021algorithm,li2019algorithm},
and solving inverse problems \cite{kobler2020total} and
 ordinary differential equations (ODEs)  \cite{chen2018neural}. Also, algorithm unrolling is applied in  \emph{learning to optimize} (L2O) \cite{L2O_chen2021learning,L2O_generalize_wichrowska2017learned}. 
 Among these works, 
 \cite{approximate_constrained_learning_Lagrandian_prediction_narasimhan2020approximate} predicts the Lagrangian multiplier by a model to efficiently solve offline optimizations which may have a large number of constraints but allow constraint violations.
These studies have their own challenges orthogonal 
to our problem where the key challenge
is the lack of complete offline information.
Thus, \ouralg, to our knowledge, is the first to leverage ML
to unroll an online optimizer for solving the online convex optimization
with budget constraints, thus having better generalization
than generic RL-based optimizers to directly obtain end solutions \cite{L2O_OnlineBipartiteMatching_Toronto_ArXiv_2021_DBLP:journals/corr/abs-2109-10380,machine_learning_cloud_du2019learning,L2O_NewDog_OldTrick_Google_ICLR_2019}.

\section{Problem Formulation}\label{sec:formulation}

As in the existing ML-based optimizers
for online problems \cite{L2O_NewDog_OldTrick_Google_ICLR_2019,L2O_OnlineBipartiteMatching_Toronto_ArXiv_2021_DBLP:journals/corr/abs-2109-10380,machine_learning_cloud_du2019learning},
 we consider an agent that interacts with
a stochastic environment.
The time horizon of an episode 
consists of $N$ time steps.
For an episode, two vectors $\bm{c}=[c_{1},\cdots, c_{N}]^\top$
and $\bm{B}=[B_1,\cdots, B_M]^\top$,  where $c_{t} $ is a context vector  and $B_m\in\mathbb{R}^+$ is the total budget for resource $m$,
 are drawn from a certain joint distribution $(\bm{c},\bm{B})\sim\mathcal{P}$, which we refer
to as the environment distribution. 
Note that $c_{i}$ and $c_{j}$ for $i\not=j$ can follow
different probability distributions, and so can $B_{i}$ and $B_{j}$ for $i\not=j$.
The random vector
$\bm{B}=[B_1,\cdots, B_M]^\top$ are revealed  at the beginning of an episode, and represents
the budgets for $M$ types of resources.
On the other hand,
$\bm{c}=[c_{1},\cdots, c_{N}]^\top$ are online contexts sequentially
revealed 
over $N$ different steps
within an episode.
That is, at step $t$, the agent only knows
$c_{1},\cdots, c_{t}$, but \emph{not} the future parameters
$c_{t+1},\cdots, c_{N}$.

At each step $t=1,\cdots,N$, the agent makes
a decision  $x_{t}\in\mathbb{R}^d$,
 consumes some budgets, and also receives
a utility. Given the decision $x_{t}$
and parameter $c_{t}$, the amount of the resource consumption
is denoted as a non-negative function $g_m(x_{t},c_{t})\geq0$, for $m=1,\cdots, M$. To be consistent with the notation of loss function,
we use a \emph{cost} or \emph{loss} $l(x_{t},c_{t})$
to denote the \emph{negative} of the utility --- the less
$l(x_{t},c_{t})$, the better. As the cost
function $l(\cdot,c_{t})$ is parameterized by $c_{t}$,
knowing $c_{t}$ is also equivalent to knowing the cost function. We assume that the loss function $l$ and the constraint functions $g_m, m=1,\cdots, M$ are twice continuously differentiable, and either the loss function $l$ or one of the constraint functions $g_m, m=1,\cdots, M$ is strongly convex in terms of the decision $x_t$.

For each episode with $(\bm{c},\bm{B})\sim\mathcal{P}$, the goal
of the agent is minimizing its total cost over the $N$ steps subject to
$M$ resource capacity constraints, which we formulate as follows:
\begin{equation}\label{eqn:target}
	\begin{split}
	&\min_{\bm{x}=(x_{1},\cdots, x_{N})}\sum_{t=1}^Nl(x_{t},c_{t}),\\
	&\text{ s.t. }\; \sum_{t=1}^N g_m(x_{t},c_{t})\leq B_m, m=1,\cdots, M.
\end{split}
\end{equation}
This is an online optimization problem
with inventory constraints (referred to as \pone)
in the sense that the short-term strict inventory constraints are imposed for each episode of
$N$ time steps. 
An episode
has its own $M$ capacity constraints which should be strictly satisfied, and the unused
budgets cannot roll over to the next episode. For ease of notation, given a policy $\pi$ which maps available inputs to \emph{feasible} actions, we denote $L(\pi)=\sum_{t=1}^Nl(x_{t},c_{t})$ as the total loss for one episode and $\mathrm{E}\left[ L(\pi)\right]$ as the expected total loss over the distribution of $(\bm{c},\bm{B})\sim\mathcal{P}$.

The setting of \pone presents new technical challenges compared with existing works on constrained online optimization. Specifically in \pone, the time horizon in an episode (episode length) is finite and can be very short. In this case, there are not many steps for algorithms to converge, and bad decisions at early steps have a large impact on the overall performance. Thus, DMD \cite{OCO_ODMD_balseiro2020dual,online_DMD_wei2020online,OCO_jiang2020online} and Lyapunov optimization \cite{Neely_Booklet} which are specifically designed for long episodes may not provide good results for \pone. Besides, unlike some studies that satisfy average constraints \cite{Lyapunov_optimization_energy_harvesting_wireless_sensor_qiu2018lyapunov,Online_cost_aware_buffer_management_video_streaming_he2013cbm} or  that only
approximately satisfy the constraints under bounded violations (i.e., \emph{soft} constraints) \cite{Neely_LowComplexity_Lyapunov_JMLR_2020_JMLR:v21:16-494,learning_aided_energy_harvesting_outdated_state_infomation_yu2019learning,online_incentive_mechanism_demand_response_colocation_datacenter_sun2016online}, \pone requires all the constraints in Eqn.~\eqref{eqn:target} be strictly satisfied. This requirement is necessary for many practical applications with finite available resources (e.g., a data center's power capacity must not be exceeded \cite{DataCenter_PowerProvisioning_Google_ISCA_2007_Fan:2007:PPW:1250662.1250665}), but makes the problem more challenging. Last but not least, the contexts in one episode in \pone are drawn from a general joint distribution (not necessarily i.i.d.). Under non-i.i.d. cases, DMD\cite{OCO_ODMD_balseiro2020dual} has performance guarantees only when each episode is long enough. CR-Pursuit\cite{CR_persuit_lin2019competitive,OOIC-M_lin2022competitive} has competitive ratios but is too conservative and may not perform well on average. To improve the average performance of \pone, new algorithms are needed to effectively utilize history data of previous episodes.

\section{Learning-Assisted Algorithm Unrolling}\label{sec:algorithm}

\subsection{Relaxed Optimization}
The design of \ouralg is based on the Lagrangian relaxed optimization method which is introduced here.
 Since it is difficult to directly solve the constrained optimization in Eqn.~\eqref{eqn:target} due to the lack of complete offline information in an online setting, many studies \cite{OCO_MULPLICATIVE_arora2012multiplicative,OCO_ODMD_balseiro2020dual,Neely_Booklet} solve the Lagrangian relaxed form  
written as follows:
\begin{equation}\label{eqn:lagrangian_general}
	\begin{split}
		\min_{\bm{x}=(x_{1},\cdots, x_{N})}\sum_{t=1}^Nl(x_{t},c_{t})
+ \lambda^\top \sum_{t=1}^N\bm{g}(x_{t},c_{t}),
\end{split}
\end{equation}
where $\bm{g}(x_{t},c_{t})=[g_1(x_{t},c_t),\cdots, g_M(x_{t},c_t)]^\top$
and $\lambda=[\lambda_{1},\cdots, \lambda_{M}]^\top$ is
the non-negative Lagrangian multiplier corresponding to the $M$ constraints $\sum_{t=1}^N\bm{g}(x_{t},c_{t})\leq \bm{B}$.
The multiplier $\lambda$ with $M$ dimensions essentially relaxes the $M$ inventory constraints,
thus decoupling the decisions over the $N$ time steps within an episode.
It is also interpreted as the resource \emph{price} in the resource allocation literature \cite{PalomarChiang_Distributed,boyd2004convex,Neely_Booklet}:  A greater
$\lambda_{t}$ means a higher price for the resource consumption, thus pushing
the agent to use less resource.
Clearly, had we known the optimal Lagrangian multiplier $\lambda^*$
at the beginning of each episode, the \pone problem would become very easy.
Unfortunately, 
knowing $\lambda^*$ also requires the complete offline information $(\bm{c},\bm{B})$,
which is not possible in the online case.
 Nevertheless, 
if we can appropriately update $\lambda_{t}$
in an online manner while strictly satisfying the constraints, we can also efficiently solve the \pone problem.   
Formally, by using $\lambda_{t}$ that is updated online
 for each step $t$,
we can instead solve the following relaxed problem:
\begin{equation}\label{eqn:balancedtarget}
	\begin{split}
		\min_{x_t} l(x_{t},c_{t})+\lambda_{t}^\top \bm{g}(x_{t},c_{t}),
		\;\text{ s.t., }\; \bm{g}(x_{t},c_{t})\leq \bm{b}_t,
	\end{split}
\end{equation}
where $\lambda_{t}=[\lambda_{t,1},\cdots, \lambda_{t,M}]^\top$, $\bm{g}(x_{t},c_t)=[g_1(x_{t},c_t),\cdots, g_M(x_{t},c_t)]^\top$, and the remaining budget for step $t$ is $\bm{b}_t=\bm{B}-\sum_{s=1}^{t-1}\bm{g}(x_{s},c_{s})$.

In fact, designing good update rules for $\lambda_{t}$ for each step $t=1,\cdots,N$
is commonly considered
in the literature
\cite{OCO_ODMD_balseiro2020dual,OCO_agrawal2014fast}.
 For example, \cite{OCO_ODMD_balseiro2020dual} update $\lambda_{t}$ by DMD, in order to meet \emph{long}-term constraints while achieving a
low regret compared to the optimal oracle. Nonetheless,
it requires a large number of time steps to converge to a good Lagrangian parameter.
Likewise,
the Lyapunov optimization technique  
introduces a virtual queue, whose length
essentially takes the role of $\lambda_{t}$
and is updated
as $\lambda_{t+1}=\max\left\{\lambda_{t}+ \bm{g}(x_{t},c_{t})
- \frac{1}{N}\bm{B},0\right\}$ or in other similar ways
\cite{Neely_Booklet}. Nonetheless,
the convergence rate of using Lyapunov optimization
is slow (even assuming $c_{t}$ is i.i.d. for $t=1,\cdots,N$),
and there exists a tradeoff
between
cost minimization and long-term constraint satisfactory, making it unsuitable for the short-term constraints
that we focus on.

Alternatively, one may want to exploit
the distribution information of $(\bm{c},\bm{B})\sim\mathcal{P}$
and solve a relaxed problem offline by considering $M$  average
constraints (referred to as \avg). That is,
we replace the
short-term capacity constraints in Eqn.~\eqref{eqn:target}
with  $\mathrm{E}_{\mathcal{P}}\left[\sum_{t=1}^N g_m(x_{t},c_{t})\right]\leq B_m$
for $m=1,\cdots,M$. By solving this relaxed problem, we can
 obtain
 a  Lagrangian multiplier
$\lambda_{\mathcal{P}}$ that only depends on
$\mathcal{P}$ but not the specific $(\bm{c},\bm{B})$. Thus, we can
replace $\lambda_t$
in Eqn.~\eqref{eqn:balancedtarget} with $\lambda_{\mathcal{P}}$. 
However, since this method uses a constant Lagrangian multiplier for all episodes, 
we will either be overly conservative
and not using the budgets as much as possible, 
or violating the
the  constraints.
\subsection{Algorithm Unrolling}

\begin{algorithm}[!t]
	\caption{Online Inference Procedure of \ouralg}
	\begin{algorithmic}[1]\label{alg:online_inference}
		\REQUIRE ML model $f_{\theta}$.
		\FOR   {t=1 to N}
		\STATE Receive $c_t$, forward propagate $f_{\theta}$ and get Lagrangian multiplier $\lambda_t=f_{\theta}\left(\bm{b}_{t},c_{t},\bar{t} \right)$.
		\STATE Solve the constrained convex optimization in \eqref{eqn:optlayer} and make action $x_t$.
		\STATE Update the resource budget $\bm{b}_{t+1}=\bm{b}_{t}-\bm{g}(x_t,c_{t})$.\label{alg:updating}
	\ENDFOR
	\end{algorithmic}
\end{algorithm}

We propose to leverage the powerful capacity of ML to find a solution.  One approach
is to train an end-to-end model that takes the online input
information and directly outputs a decision.
 But, the end-to-end model should be large enough to capture the possibly complex logic of the optimal policy, and the end-to-end models often have
 poor interpretability and worse
 generalization (see the
 comparison between \ouralg and
 the generic end-to-end approach in Section~\ref{sec:simulation}). 
Therefore, 
instead of replacing the whole decision pipeline with ML, 
\emph{we only
plug an ML model in the most challenging stage} --- online updating of the
Lagrangian multiplier
$\lambda_{t}$  needed to solve the relaxed problem
in Eqn.~\eqref{eqn:balancedtarget}.

\begin{figure}
	\includegraphics[width={0.45\textwidth}]{./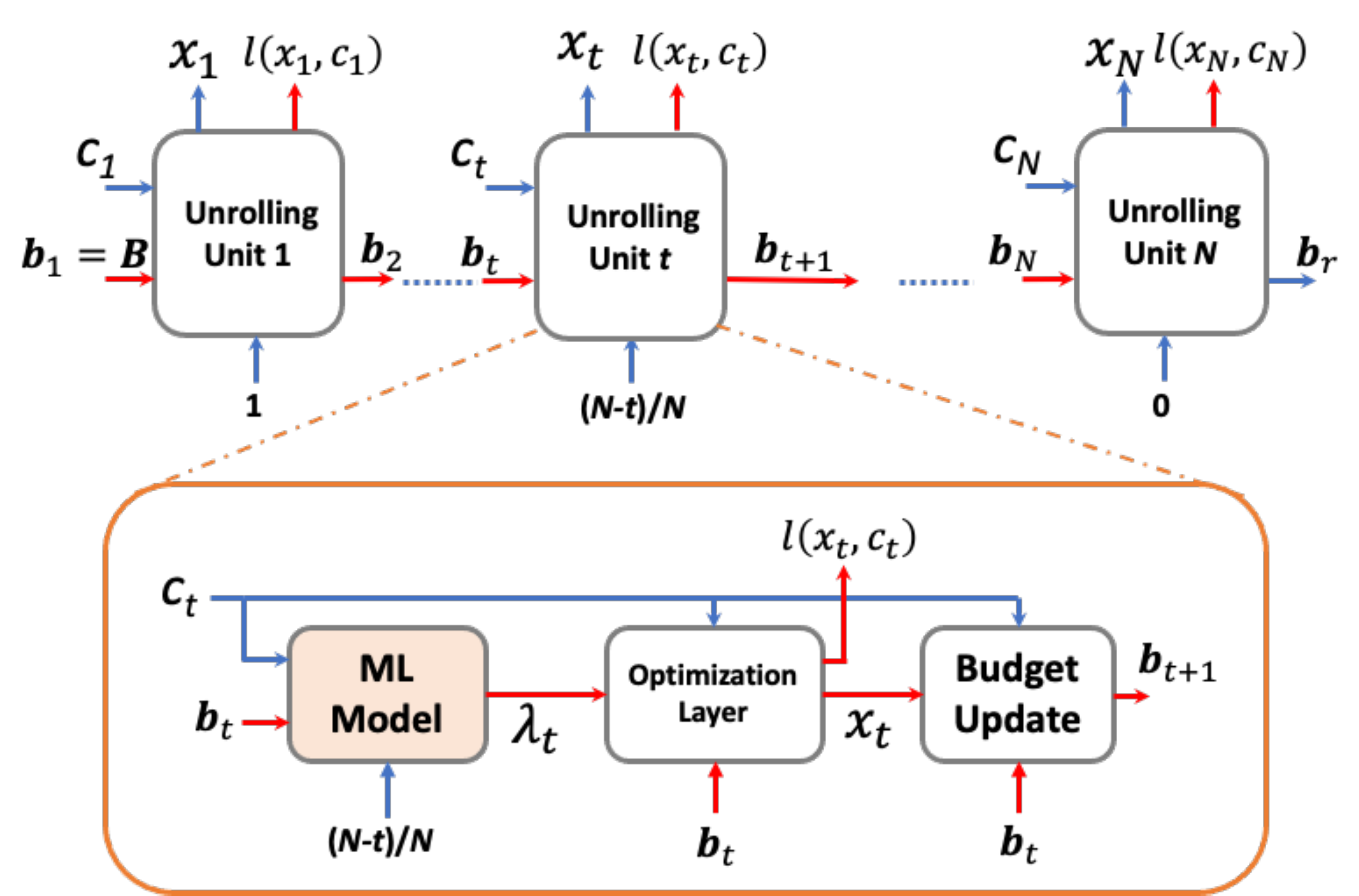}
	\caption{Architecture of \ouralg.
		The red lines indicate the flows that need back propagation.}
  \vspace{-0.3cm}	
\label{fig:illustration}
\end{figure}
As shown in Algorithm \ref{alg:online_inference} and illustrated in Fig.~\ref{fig:illustration},
the decision pipeline at step $t$
can be decomposed into three stages as follows.

\textbf{Updating $\lambda_{t}$.}
At the beginning of step $t=1,\cdots,N$,
the ML model takes
 the parameter $c_{t}$, the remaining budget $\bm{b}_{t}=[b_{t,1},\cdots, b_{t,m}]^\top$ and the normalized number of  remaining steps $\bar{t}=\frac{N-t}{N}$ as the inputs,  and outputs the Lagrangian multiplier $\lambda_t=[\lambda_{t,1},\cdots, \lambda_{t,m}]^\top$.
 Letting $f_{\theta}$ denote the ML model parameterized by  $\theta$,  we have
$
	\lambda_t=f_{\theta}\left(\bm{b}_{t},c_{t},\bar{t} \right).
$

\textbf{Optimization layer.}
In the
 optimization layer, we solve a relaxed convex problem
formulated in Eqn.~\eqref{eqn:balancedtarget}. The natural constraints
on the remaining resource budgets ensure that the \emph{strict} inventory constraints are always satisfied by \ouralg.
We denote the optimization layer as
$p(c_{t},\lambda_{t},\bm{b}_t)$ and thus have:
\begin{equation}\label{eqn:optlayer}
	\begin{split}
	x_t=p(c_{t},\lambda_{t},\bm{b}_t)\!=&\!\arg\!\min_{x} \!\left( l(x,c_{t})+\lambda_{t}^\top \bm{g}(x,c_{t})\right),\\
	 &\text{ s.t., } \bm{g}(x,c_{t})\leq \bm{b}_t.
	\end{split}
	\end{equation}

\textbf{Updating resource budgets.}
In the last stage, the remaining resource budgets 
serve as an input for the next recurrence and
are updated as
$
\bm{b}_{t+1}=\bm{B}-\sum_{s=1}^{t}\bm{g}(x_s,c_{s})=\bm{b}_{t}-\bm{g}(x_t,c_{t})
$.

Each episode
 includes $N$ recurrences, each for one decision step.
The cost for step $t$ is calculated after the optimization layer as
$l(x_t,c_t)$. 
Note that in the $N$-th  recurrence which is the final stopping step,  the remaining budget $\bm{b}_r=\bm{b}_N-\bm{g}(x_N,c_{N})$ will be wasted if not used up.  
Thus, we directly set $\lambda_N=0$  for the $N$-th unrolling unit.

\section{Training the Unrolling Architecture}\label{sec:training}
To clearly explain the back propagation of the unrolling model, we first consider the offline training where offline distribution information is available. Then we extend to the online training setting where the data is collected online.
\subsection{Offline Training}
\subsubsection{Training Objective}
For the ease of notation,
we denote the online optimizer as $\bm{h}_\theta\left(\bm{B},\bm{c} \right)$.
For offline training, we are given an \emph{unlabeled} training dataset $S=\left\lbrace \left( \bm{c}_1,\bm{B}_1\right) ,\cdots, \left( \bm{c}_n,\bm{B}_n\right) \right\rbrace $,
 with $n$ samples of $(\bm{c}, \bm{B})$.
The training dataset can be
 synthetically generated
by sampling from the target distribution for the online input
 $(\bm{c}, \bm{B})$,
 which is a standard technique in the context of \emph{learning to optimize} \cite{L2O_chen2021learning,L2O_Combinatorial_Graphs_2017_co_graphs_nips_2017,L2O_LearningToOptimize_Berkeley_ICLR_2017,L2O_OnlineBipartiteMatching_Toronto_ArXiv_2021_DBLP:journals/corr/abs-2109-10380,machine_learning_cloud_du2019learning}.
 By forward propagation, we can get the empirical training loss  as $L(\bm{h}_\theta,S)=\frac{1}{n}\sum_{i=1}^n\sum_{t=1}^Nl(\bm{x}_{i,t},\bm{c}_{i,t})$ where $\bm{x}_{i,t}$ is the output of the online optimizer $\bm{h}_\theta$ regarding $\bm{c}_i$ and $\bm{B}_i$.
By minimizing the empirical loss, we get $\hat{\theta}=\arg\min_{\theta}L(\bm{h}_\theta,S)$.

\subsubsection{Backpropagation}
 Typically,
the minimization of the training loss is performed by gradient descent-based algorithms like SGD or Adam, which need back propagation to get the gradient of the loss with respect to the ML model weight $\theta$.
Nonetheless, unlike standard ML training (e.g., neural network training with only linear and activation operations), our unrolled recurrent architecture includes
an \emph{implicit} layer --- the optimization layer \cite{DNN_ImplicitLayers_Zico_Website}.
Additionally, the unrolling architecture has multiple skip connections.
Thus, the back-propagation process is dramatically different from that of  standard recurrent neural networks.
Next, we derive the gradients for back propagation in our unrolling design.
Note that the loss $l(x_t,c_t)$ for any $t=1,\cdots,N$ is directly determined by the output of the optimization layer $x_t$ and the parameter $c_t$, and $x_t$ needs back propagation. Thus, by the chain rule, we have
\begin{equation}\label{eqn:lossgradient}
	\bigtriangledown_{\theta}l\!\left(x_t,c_t \right)\!\!=\!\! 	 \bigtriangledown_{x_{t}}\!l\!\left(x_t,c_t \right)\!\!\left( \!\bigtriangledown_{\lambda_t}\!x_t \!\!\bigtriangledown_\theta \lambda_t\!\!+\!\!\bigtriangledown_{\bm{b}_t}\!x_t \!\!\bigtriangledown_\theta \bm{b}_t\right).
	\end{equation}

To get $\bigtriangledown_{\lambda_t}\!x_t $ and $\bigtriangledown_{\bm{b}_t}\!x_t $ in Eqn. \eqref{eqn:lossgradient}, we need to perform back propagation for the optimization layer $p(c_{t},\lambda_{t},\bm{b}_t)$. This is a challenging task and will be addressed in Section~\ref{sec:differentiation}.
The other gradients in Eqn.~\eqref{eqn:lossgradient} include $\bigtriangledown_\theta \lambda_t$ and $\bigtriangledown_\theta \bm{b}_t$. Note that $\lambda_t$,
which  is the ML model output directly determined by its ML model weight $\theta$,
and
the remaining budget $\bm{b}_{t}$ both need back propagation. Thus, the gradient of $\lambda_t$ with respect to the ML model weight $\theta$ is expressed as
\begin{equation}\label{eqn:gradlambda}
	\bigtriangledown_\theta \lambda_t=\bigtriangledown_\theta f_{\theta}\left(\bm{b}_{t},c_{t},\bar{t} \right)+\bigtriangledown_{\bm{b}_{t}} f_{\theta}\left(\bm{b}_{t},c_{t},\bar{t} \right)\bigtriangledown_\theta\bm{b}_{t},
\end{equation}
Now, it remains to derive $\bigtriangledown_\theta\bm{b}_{t}$, which is important since $b_t$ is the signal connecting two adjacent recurrences. By the expression of $b_t$ in Line \ref{alg:updating} of Algorithm \ref{alg:online_inference}, we have
\begin{equation}\label{eqn:gradbudget}
	\bigtriangledown_\theta \bm{b}_t=\bigtriangledown_\theta \bm{b}_{t-1}+\bigtriangledown_{x_{t-1}} \bm{g}(x_{t-1},c_{t-1})\bigtriangledown_{\lambda_{t-1}}\!x_{t-1} \bigtriangledown_\theta \lambda_{t-1},
\end{equation}
Combining Eqn.~\eqref{eqn:lossgradient}, \eqref{eqn:gradlambda} and \eqref{eqn:gradbudget}, we get the recurrent expression for back propagation.
Then, by adding up the gradients of the losses over $N$ time steps,
we get the gradient of the total loss as
$\bigtriangledown_\theta L (\bm{h}_{\theta},S)=\frac{1}{n}\sum_{i=1}^n\sum_{t=1}^N\bigtriangledown_{\theta}l\left(x_{i,t},c_{i,t} \right)$.

\subsubsection{Differentiating the Optimization Layer}\label{sec:differentiation}
It is challenging to get the close-form solution and its gradients for many constrained optimization problems.
One possible remedy
is to use some black-box gradient estimators like zero-order optimization \cite{zero-order_opt_signalprocessing_ML_liu2020primer,zero_order_optimization_tutorial_ruffio2011tutorial}. However, zero-order gradient estimators are not computationally efficient since many samples are needed to estimate a gradient. 
Another method is to train a deep neural network to approximate the optimization layer in  Eqn.~\eqref{eqn:optlayer} and then calculate the gradients based on the neural network. However, we need many samples to pre-train the neural network, and the gradient estimation error can be large.
we analytically differentiate the solution to Eqn.~\eqref{eqn:optlayer} 
in the optimization layer
with respect to the inputs $\lambda_t$, and $\bm{b}_t$ by exploiting  KKT conditions  \cite{DNN_ImplicitLayers_Zico_Website,boyd2004convex}. The KKT-based differentiation method, given in Proposition~\ref{lma:grad_optlayer}, % Eqn.~\eqref{eqn:optlayer} 
is computationally efficient, explainable and accurate (under mild technical conditions).
\begin{proposition}[Back-propagation by KKT]\label{lma:grad_optlayer}
	Assume that $x_t$ and $\mu_t$ are the primal and dual solutions to Eqn.~\eqref{eqn:optlayer} , 	respectively.
	Let $\Delta_{11}=\bigtriangledown_{x_t x_t}l\left( x_t, c_t\right)+\sum_{m=1}^M\left(  \lambda_{m,t} +\mu_{m,t}\right) \!\!\bigtriangledown_{x_t x_t}g_m\!\left(x_{t},c_{t} \right)$, $\Delta_{12}=\left[ \bigtriangledown_{x_{t}}\bm{g}\left(x_{t},c_{t} \right)\right]^\top $, $\Delta_{21}=\mathrm{diag}(\mu_t)\bigtriangledown_{x_{t}}\bm{g}\left(x_{t},c_{t} \right)$, and $\Delta_{22}=\mathrm{diag}\left( \bm{g}\left(x_{t},c_{t} \right)-\bm{B}_{t}\right)$. %R
	If the conditions in Proposition~\ref{lma:condition_optlayer} are satisfied, the gradients of the optimization layer  w.r.t.  $\lambda_t$ and $\bm{b}_t$ are %written as
	\[
	\bigtriangledown_{\lambda_t}x_{t}= -\left(\Delta_{11}^{-1}+\Delta_{11}^{-1}\Delta_{12}\mathrm{Sc}\left(\Delta, \Delta_{11}\right)^{-1}\Delta_{21}\Delta_{11}^{-1} \right) \Delta_{12},
	\]
	\[
	\bigtriangledown_{\bm{b}_t}x_{t}= -\Delta_{11}^{-1}\Delta_{12}\mathrm{Sc}\left(\Delta, \Delta_{11}\right)^{-1}\mathrm{diag}(\mu_t),
	\]
	where $\mathrm{Sc}\left(\Delta, \Delta_{11}\right)=\Delta_{22}-\Delta_{21}\Delta^{-1}_{11}\Delta_{12} $ denotes the Shur-complement of $\Delta_{11}$ in 
	$\Delta=\left[[\Delta_{11},\Delta_{12}]; [\Delta_{21},\Delta_{22}]\right] $.
	$\hfill{\square}$
\end{proposition}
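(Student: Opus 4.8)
The plan is to treat the solution map $p(c_t,\lambda_t,\bm b_t)$ of the optimization layer in Eqn.~\eqref{eqn:optlayer} as an implicit function defined by its KKT system, and to obtain $\bigtriangledown_{\lambda_t}x_t$ and $\bigtriangledown_{\bm b_t}x_t$ by differentiating that system and applying the implicit function theorem. First I would write the KKT conditions for Eqn.~\eqref{eqn:optlayer} with dual variable $\mu_t\ge 0$: stationarity $\bigtriangledown_{x}l(x_t,c_t)+\sum_{m=1}^M(\lambda_{m,t}+\mu_{m,t})\bigtriangledown_{x}g_m(x_t,c_t)=0$, complementary slackness $\mathrm{diag}(\mu_t)\big(\bm g(x_t,c_t)-\bm b_t\big)=0$, together with primal and dual feasibility. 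Under the assumption of Section~\ref{sec:formulation} that $l$ or some $g_m$ is strongly convex (so that $x_t$ is the unique minimizer) and the conditions of Proposition~\ref{lma:condition_optlayer} (which provide strict complementarity and nonsingularity of the relevant Jacobian), these two vector equations determine $(x_t,\mu_t)$ as a continuously differentiable function of $(\lambda_t,\bm b_t)$ near the current point.

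Next I would differentiate the two KKT equations totally. Differentiating stationarity with respect to $\lambda_t$ produces the explicit term $\big[\bigtriangledown_{x_t}\bm g(x_t,c_t)\big]^\top=\Delta_{12}$ (from the $\lambda_{m,t}$ appearing linearly), plus $\Delta_{11}\bigtriangledown_{\lambda_t}x_t+\Delta_{12}\bigtriangledown_{\lambda_t}\mu_t$, where $\Delta_{11}$ is exactly the stated Hessian of the Lagrangian; differentiating complementary slackness gives $\Delta_{21}\bigtriangledown_{\lambda_t}x_t+\Delta_{22}\bigtriangledown_{\lambda_t}\mu_t=0$, with $\Delta_{21}=\mathrm{diag}(\mu_t)\bigtriangledown_{x_t}\bm g$ and $\Delta_{22}=\mathrm{diag}(\bm g(x_t,c_t)-\bm b_t)$. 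Stacking yields the block linear system $\Delta\,[\bigtriangledown_{\lambda_t}x_t;\,\bigtriangledown_{\lambda_t}\mu_t]=[-\Delta_{12};\,0]$ with $\Delta=[[\Delta_{11},\Delta_{12}];[\Delta_{21},\Delta_{22}]]$. Running the same computation for $\bm b_t$, which enters only the complementary-slackness block and contributes $-\mathrm{diag}(\mu_t)$, gives $\Delta\,[\bigtriangledown_{\bm b_t}x_t;\,\bigtriangledown_{\bm b_t}\mu_t]=[0;\,\mathrm{diag}(\mu_t)]$.

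Then I would invert $\Delta$ with the standard Schur-complement block-inverse formula: the top-left block of $\Delta^{-1}$ is $\Delta_{11}^{-1}+\Delta_{11}^{-1}\Delta_{12}\,\mathrm{Sc}(\Delta,\Delta_{11})^{-1}\Delta_{21}\Delta_{11}^{-1}$ and the top-right block is $-\Delta_{11}^{-1}\Delta_{12}\,\mathrm{Sc}(\Delta,\Delta_{11})^{-1}$, where $\mathrm{Sc}(\Delta,\Delta_{11})=\Delta_{22}-\Delta_{21}\Delta_{11}^{-1}\Delta_{12}$. Multiplying by the two right-hand sides and reading off the $x_t$-block (the zero block in each right-hand side annihilates the complementary term) yields precisely the claimed expressions $\bigtriangledown_{\lambda_t}x_t= -\big(\Delta_{11}^{-1}+\Delta_{11}^{-1}\Delta_{12}\mathrm{Sc}^{-1}\Delta_{21}\Delta_{11}^{-1}\big)\Delta_{12}$ and $\bigtriangledown_{\bm b_t}x_t= -\Delta_{11}^{-1}\Delta_{12}\mathrm{Sc}^{-1}\mathrm{diag}(\mu_t)$.

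The main obstacle is not this algebra but the justification that the implicit function theorem is applicable: that $(x_t,\mu_t)$ is a locally unique and differentiable solution, that $\Delta_{11}$ is nonsingular so the Schur complement is well defined, and that $\Delta$ itself is invertible so the two linear systems have unique solutions. This is exactly where the hypotheses imported from Proposition~\ref{lma:condition_optlayer} are used: strong convexity together with a second-order sufficiency / constraint-qualification condition makes $\Delta_{11}$ positive definite on the relevant subspace, and strict complementarity lets active and inactive constraints be treated uniformly (for an inactive $m$, $\mu_{m,t}=0$ and $g_m(x_t,c_t)-b_{t,m}<0$, so the $m$-th differentiated complementary-slackness row forces $\bigtriangledown\mu_{m,t}=0$, consistent with the block form). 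I would also note that differentiability can fail on a measure-zero set of parameters where strict complementarity or nondegeneracy breaks down, so the formulas are to be understood wherever the stated conditions hold.
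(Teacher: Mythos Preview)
Your proposal is correct and follows essentially the same approach as the paper: write the KKT stationarity and complementary-slackness equations, differentiate them with respect to $\lambda_t$ and $\bm b_t$ to obtain block linear systems with coefficient matrix $\Delta$, and solve via the Schur-complement block-inverse formula. Your added discussion of when the implicit function theorem applies and how the active/inactive split behaves is, if anything, slightly more thorough than the paper's own proof, which defers the invertibility justification entirely to Proposition~\ref{lma:condition_optlayer}.
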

We find that to get  truly accurate gradient computation by Proposition~\ref{lma:condition_optlayer},  the Shur-complement $\mathrm{Sc}\left(\Delta, \Delta_{11}\right)$ and $\Delta_{11}$ should be invertible. Otherwise, we can get approximated gradients by taking pseudo-inverse of $\mathrm{Sc}\left(\Delta, \Delta_{11}\right)$ and $\Delta_{11}$. The sufficient conditions to guarantee perfectly accurate gradient computation are given in Proposition~\ref{lma:condition_optlayer}.
\begin{proposition}[Sufficient Conditions of Accurate Differentiation]\label{lma:condition_optlayer}
	Assume that the  problem in Eqn.~\eqref{eqn:optlayer} satisfies strong duality. The loss $l$ or one of  the constraints $g_m,m=1,\cdots, M$ is strongly convex with respect to $x$.
Denote $\mathcal{A}$ as the  index set of constraints that are activated (i.e., equality holds) under the optimal solution.
	$\mu_{m,t}\neq 0, \forall m\in\mathcal{A}$, the size of the activation set satisfies $ |\mathcal{A}|\leq d$ with $d$ as the action dimension, and the gradients $\bigtriangledown_{x_t}g_m(x_t), m\in\mathcal{A}$ are linearly independent and not zero vectors, the gradients in Proposition~\ref{lma:grad_optlayer} are perfectly accurate.
\end{proposition}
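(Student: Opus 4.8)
The plan is to recognize the two closed-form expressions in Proposition~\ref{lma:grad_optlayer} as the output of implicitly differentiating the KKT system of Eqn.~\eqref{eqn:optlayer}, and then to show that the hypotheses listed are exactly what make every matrix inversion in that derivation valid. First I would write the stationarity and complementary-slackness conditions of Eqn.~\eqref{eqn:optlayer} as a map $F(x,\mu,\lambda_t,\bm b_t)=0$, with $F_1=\bigtriangledown_x l(x,c_t)+\sum_m(\lambda_{m,t}+\mu_{m,t})\bigtriangledown_x g_m(x,c_t)$ and $F_2=\mathrm{diag}(\mu)\big(\bm g(x,c_t)-\bm b_t\big)$. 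Since $l$ and the $g_m$ are twice continuously differentiable, $F$ is $C^1$, and a direct computation gives $\partial F/\partial(x,\mu)=\Delta=\left[[\Delta_{11},\Delta_{12}];[\Delta_{21},\Delta_{22}]\right]$, together with $\partial F/\partial\lambda_t=\left[\Delta_{12};\,0\right]$ and $\partial F/\partial\bm b_t=\left[0;\,-\mathrm{diag}(\mu_t)\right]$. If $\Delta$ is invertible, the implicit function theorem produces a locally $C^1$ branch $(x_t,\mu_t)(\lambda_t,\bm b_t)$ with derivatives $-\Delta^{-1}\,\partial F/\partial\lambda_t$ and $-\Delta^{-1}\,\partial F/\partial\bm b_t$; plugging in the Schur-complement block inverse of $\Delta$ (in terms of $\Delta_{11}$ and $\mathrm{Sc}(\Delta,\Delta_{11})$) and reading off the first block row yields precisely the two formulas of Proposition~\ref{lma:grad_optlayer}.

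So the proposition reduces to two invertibility facts: $\Delta_{11}$ is invertible, and $\mathrm{Sc}(\Delta,\Delta_{11})$ is invertible. For $\Delta_{11}$, observe that it is the Hessian (in $x$) of the penalized Lagrangian $l(\cdot,c_t)+\sum_m(\lambda_{m,t}+\mu_{m,t})g_m(\cdot,c_t)$; because the problem is convex, $\bigtriangledown_{xx}g_m\succeq 0$ and $\lambda_{m,t}+\mu_{m,t}\ge 0$, so strong convexity of $l$ — or of a constraint function that enters with a strictly positive coefficient, which is the case for a strongly convex active constraint since then $\mu_{m,t}\neq 0$ — forces $\Delta_{11}\succeq\sigma I\succ 0$. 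For $\mathrm{Sc}(\Delta,\Delta_{11})=\Delta_{22}-\Delta_{21}\Delta_{11}^{-1}\Delta_{12}$, I would split the constraint index set into the active set $\mathcal A$ and its complement and use complementary slackness: for $m\notin\mathcal A$ we have $\mu_{m,t}=0$, so the $m$-th rows of $\Delta_{21}=\mathrm{diag}(\mu_t)\bigtriangledown_x\bm g$ vanish while the $m$-th diagonal entry of $\Delta_{22}=\mathrm{diag}(\bm g(x_t,c_t)-\bm b_t)$ is strictly negative; for $m\in\mathcal A$ that diagonal entry is zero. Hence, after reordering, $\mathrm{Sc}(\Delta,\Delta_{11})$ is block lower triangular with an invertible diagonal block on the inactive coordinates and the block $-\mathrm{diag}(\mu_{\mathcal A})\,G_{\mathcal A}^\top\Delta_{11}^{-1}G_{\mathcal A}$ on $\mathcal A$, where $G_{\mathcal A}$ stacks the columns $\bigtriangledown_{x_t}g_m$, $m\in\mathcal A$. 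This last block is invertible since $\mathrm{diag}(\mu_{\mathcal A})$ is invertible ($\mu_{m,t}\neq 0$ on $\mathcal A$) and $G_{\mathcal A}^\top\Delta_{11}^{-1}G_{\mathcal A}\succ 0$, which holds because $\Delta_{11}^{-1}\succ 0$ and $G_{\mathcal A}$ has full column rank — guaranteed by $|\mathcal A|\le d$ and the linear independence (hence nonvanishing) of $\{\bigtriangledown_{x_t}g_m\}_{m\in\mathcal A}$.

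Two technical points remain to be tightened. To make sure the branch supplied by the implicit function theorem coincides with the actual layer output $p(c_t,\lambda_t,\bm b_t)$ — so that the computed quantities really are $\bigtriangledown_{\lambda_t}x_t$ and $\bigtriangledown_{\bm b_t}x_t$ — I would use strict complementarity ($\mu_{m,t}\neq 0$ on $\mathcal A$, and $g_m-b_{t,m}<0$ off $\mathcal A$) so that primal/dual feasibility and the identity of the active set survive small perturbations of $(\lambda_t,\bm b_t)$; convexity together with strong duality then identifies the perturbed KKT point with the unique primal minimizer (uniqueness coming from strong convexity), and linear independence of the active gradients makes the dual $\mu_t$ unique, so the derivative is well defined. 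The other point is the ``strongly convex constraint'' branch, where I should verify the associated coefficient $\lambda_{m,t}+\mu_{m,t}$ is strictly positive (covered when that constraint is active, and otherwise needing a mild extra condition such as the ML-produced $\lambda_{m,t}$ being bounded away from $0$). I expect the main obstacle to be the invertibility of $\mathrm{Sc}(\Delta,\Delta_{11})$: the complementary-slackness-based block-triangularization is the crux, since it is what converts the KKT-regularity hypotheses ($\mu_{m,t}\neq 0$, $|\mathcal A|\le d$, linearly independent active gradients) into the algebraic nonsingularity the two gradient formulas require.
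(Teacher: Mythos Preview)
Your proposal is correct and follows essentially the same approach as the paper: reduce the validity of the formulas to invertibility of $\Delta_{11}$ (via strong convexity of the penalized Lagrangian) and of $\mathrm{Sc}(\Delta,\Delta_{11})$ (via the active/inactive block-triangular decomposition and full column rank of the active-constraint Jacobian). Your treatment is in fact slightly more careful than the paper's---you explicitly invoke the implicit function theorem and argue that the IFT branch coincides with the optimizer via strict complementarity, and you flag the positivity issue for $\lambda_{m,t}+\mu_{m,t}$ in the strongly-convex-constraint case, which the paper handles only by remarking that the ML output $\lambda_{m,t}$ can be forced positive.
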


\begin{remark}\label{remark1}
To derive the gradient of Eqn.~\eqref{eqn:optlayer} with respect to an input parameter, we take gradients on both sides of the equations in KKT conditions 
by the chain rule and get new equations about the gradients. 
 By solving the obtained set of equations and exploiting the block matrix inversion, we can derive the gradients with respect to the inputs in Proposition~\ref{lma:grad_optlayer}.

The conditions in Proposition \ref{lma:condition_optlayer} are mild in practice.  First, strong duality is easily satisfied for the considered convex optimization in Eqn.~\eqref{eqn:optlayer} given the Slater's condition \cite{boyd2004convex}.  Besides, the requirement of strong convexity excludes linear programming (LP). Actually, LP problems with resource constraints  are usually solved by other relaxations other than our considered relaxation in Eqn. \eqref{eqn:lagrangian_general} \cite{devanur2019near}.
The other conditions are related to activated constraints.  According to the condition of complementary slackness  \cite{boyd2004convex}, the condition that optimal dual variables corresponding to the activated constraints are not zero typically holds. We also require that the number of activated constraints is less than the action dimension, and the gradient vectors of the activated constraint functions under optimal solutions should be independent from each other. Given that at most a small number of %(usually no larger than $d$)
constraints are activated in most cases, the two conditions are easily satisfied. Actually, the independence condition requires that the activated constraints are not redundant --- an activated constraint function is not a linear combination of any other activated constraint functions; otherwise, it can be replaced by other constraints.   The proof of  Proposition~\ref{lma:condition_optlayer} is given in the appendix \ref{sec:proofbackpropagation_section5}. 
\end{remark}
\subsection{Online Training}\label{sec:online_training}
In practice, we may have a cold-start setting without
many offline samples.
An efficient approach for this setting is  online stochastic gradient descent (SGD) with its algorithm in Appendix \ref{sec:online_training_alg}. Concretely, when the $i$-th instance arrives, we perform online inference by Algorithm~\ref{alg:online_inference}. After the instance with $N$ steps ends, we collect the context and budget data of this episode and update the ML model weight $\hat{\theta}_i$ by performing one-step gradient descent,  
i,e, $\hat{\theta}_i=\hat{\theta}_{i-1}-\bar{\alpha}\triangledown_{\theta} L(h_{\hat{\theta}_{i-1}},\bm{c}_i)$ where $L(h_{\hat{\theta}_{i-1}},\bm{c}_i)$ is the loss of the unrolling model for the $i$th instance and $\bar{\alpha}$ is the stepsize. The back propagation method is the same as the offline training. Then, with the updated
$\hat{\theta}_i$, we perform inference by Algorithm \ref{alg:online_inference} for the instance in the $(i+1)$-th round. We will show by analysis that the average cost decreases with time.

\section{Performance Analysis}\label{sec:analysis}

In this section, we bound the expected cost when the trained ML model $f_\theta$
is used in \ouralg. 

\begin{definition}\label{def:optimal_weight}
	The weight in
the ML model $f_{\theta}$
 (and also
the online optimizer $\bm{h}_{\theta}$)
that minimizes the expected loss $\mathrm{E}\left[ L(\bm{h}_{\theta},\bm{c})\right] $ with respect to the distribution of $(\bm{c},\bm{B})\sim\mathcal{P}$ is defined as
	$
	\theta^*=\arg\min_{\theta\in\Theta}\mathrm{E}\left[ L(\bm{h}_{\theta},\bm{c})\right],
	$
	and the weight that minimizes the empirical loss $L(\bm{h}_{\theta},S)$ is defined as
	$
	\hat{\theta}^*=\arg\min_{\theta\in\Theta}L(\bm{h}_{\theta},S),
	$
	where $\Theta$ is the weight space.
\end{definition}

In Definition~\ref{def:optimal_weight}, given
the weight space $\Theta$,
$\bm{h}_{\theta^*}$ is the best online optimizer based on the unrolling architecture
in terms of the expected cost.
$\bm{h}_{\theta^*}$  is not the offline-optimal policy, but it is close to the policy that performs best given available online information when the capacity of the ML model
and weight space $\Theta$
are large enough.
Next, we show the performance gap of \ouralg compared with
 $h_{\theta^*}$.

\begin{theorem}\label{thm:generalization}
By the optimization layer
in Eqn.~\eqref{eqn:optlayer}, \ouralg satisfies the inventory constraints for each \pone instance. 
	 Suppose that $\hat{\theta}$ is the ML model weight by offline training on dataset $S$ with $n$ samples, and that we plug it into the online optimizer $\bm{h}_{\hat{\theta}}$. With probability at least $1-\delta,\delta\in(0,1)$,
\begin{equation}\label{eqn:theorem_bound}
	\begin{split}
	\mathrm{E}\left[ L(\bm{h}_{\hat{\theta}})\right] \!\!-\!\!\mathrm{E}\left[ L(\bm{h}_{\theta^*})\right] 
 \leq \mathcal{E}\left(\bm{h}_{\hat{\theta}},S\right) +4R_n(L\circ \mathbb{H})\\+2\left( \Gamma_{L,c}\omega_c+\Gamma_{L,b}\omega_b\right) \sqrt{\frac{\ln(2/\delta)}{n}},
	\end{split}
\end{equation}
	where $\mathcal{E}\left(\bm{h}_{\hat{\theta}},S\right)  =L(\bm{h}_{\hat{\theta}}, S)-L(\bm{h}_{\hat{\theta}^*}, S)$ is the training error, $R_n(L\circ\mathbb{H})$ is the Rademacher complexity regarding the loss space $L\circ \mathbb{H}=\left\lbrace L(\bm{h}), \bm{h}\in\mathbb{H}\right\rbrace $  with $\mathbb{H}$ being the ML model set,
 $\omega_c=\max_{\bm{c},\bm{c}'\in \mathbb{C}}\left\|\bm{c}-\bm{c}' \right\| $ is the size of the parameter space $\mathbb{C}$, $\omega_b=\max_{\bm{B},\bm{B}'\in \mathbb{B}}\left\|\bm{B}-\bm{B}' \right\| $ is the size of the capacity constraint space $\mathbb{B}$, $\Gamma_{L,c}$ and $\Gamma_{L,b}$ 
are the Lipschitz constants of  the total loss
$L(\bm{h}_{\theta},\bm{c})=\sum_{t=1}^Nl(x_t,c_t)$ with respect to $\bm{c}$ and  $\bm{B}$, 
respectively.
\end{theorem}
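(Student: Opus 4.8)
The feasibility claim is immediate: the optimization layer in Eqn.~\eqref{eqn:optlayer} enforces $\bm{g}(x_t,c_t)\le \bm{b}_t=\bm{B}-\sum_{s=1}^{t-1}\bm{g}(x_s,c_s)$ at every step $t$, so summing over $t=1,\dots,N$ gives $\sum_{t=1}^N\bm{g}(x_t,c_t)\le \bm{B}$, i.e., every constraint in Eqn.~\eqref{eqn:target} holds for every instance. The substantive part is the excess-risk bound, and the plan is to combine the classical three-term statistical-learning decomposition with a Rademacher-complexity uniform-convergence argument, where the only nonstandard ingredient is controlling the range/bounded-differences constant of the loss through the Lipschitz assumptions.

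First I would write
$
\mathrm{E}\!\left[L(\bm{h}_{\hat\theta})\right]-\mathrm{E}\!\left[L(\bm{h}_{\theta^*})\right]
=\big(\mathrm{E}[L(\bm{h}_{\hat\theta})]-L(\bm{h}_{\hat\theta},S)\big)
+\big(L(\bm{h}_{\hat\theta},S)-L(\bm{h}_{\hat\theta^*},S)\big)
+\big(L(\bm{h}_{\hat\theta^*},S)-\mathrm{E}[L(\bm{h}_{\theta^*})]\big).
$
The middle term is exactly the training error $\mathcal{E}(\bm{h}_{\hat\theta},S)$. The first term is bounded by $\sup_{\bm{h}\in\mathbb{H}}\big(\mathrm{E}[L(\bm{h})]-L(\bm{h},S)\big)$. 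For the third term, since $\hat\theta^*$ minimizes the empirical loss we have $L(\bm{h}_{\hat\theta^*},S)\le L(\bm{h}_{\theta^*},S)$, so this term is at most $L(\bm{h}_{\theta^*},S)-\mathrm{E}[L(\bm{h}_{\theta^*})]\le\sup_{\bm{h}\in\mathbb{H}}\big(L(\bm{h},S)-\mathrm{E}[L(\bm{h})]\big)$. Thus it suffices to bound the two one-sided uniform deviations, each of which will contribute a $2R_n(L\circ\mathbb{H})$ term plus a concentration term, and a union bound over the two events (each of probability $1-\delta/2$) then yields the stated bound with probability $1-\delta$.

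To bound each uniform deviation I would proceed by the standard McDiarmid-plus-symmetrization route. The key observation is that, because the episode total loss $L(\bm{h}_\theta,\cdot)$ is $\Gamma_{L,c}$-Lipschitz in $\bm{c}$ and $\Gamma_{L,b}$-Lipschitz in $\bm{B}$ (uniformly over $\bm{h}\in\mathbb{H}$), the triangle inequality gives $\big|L(\bm{h},\bm{c},\bm{B})-L(\bm{h},\bm{c}',\bm{B}')\big|\le \Gamma_{L,c}\omega_c+\Gamma_{L,b}\omega_b=:\Lambda$ for all $\bm{c},\bm{c}'\in\mathbb{C}$, $\bm{B},\bm{B}'\in\mathbb{B}$. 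Hence replacing a single sample $(\bm{c}_i,\bm{B}_i)$ perturbs $S\mapsto\sup_{\bm{h}}(\mathrm{E}[L(\bm{h})]-L(\bm{h},S))$ by at most $\Lambda/n$, so McDiarmid's inequality gives, with probability at least $1-\delta/2$, that this supremum is at most its expectation plus $\Lambda\sqrt{\ln(2/\delta)/(2n)}\le \Lambda\sqrt{\ln(2/\delta)/n}$; the standard symmetrization inequality then bounds the expected supremum by $2R_n(L\circ\mathbb{H})$. Applying the identical argument to $\sup_{\bm{h}}(L(\bm{h},S)-\mathrm{E}[L(\bm{h})])$ and combining gives Eqn.~\eqref{eqn:theorem_bound}.

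The main obstacle is the uniform Lipschitz claim underlying Step~3: one must show that the Lipschitz constants $\Gamma_{L,c},\Gamma_{L,b}$ are finite and hold uniformly over the model class $\mathbb{H}$, i.e., that the unrolled pipeline — in particular the implicit $\arg\min$ optimization layer $p(c_t,\lambda_t,\bm{b}_t)$ — propagates Lipschitz dependence on the inputs $(\bm{c},\bm{B})$ through all $N$ recurrences to the cumulative loss. I would establish this as a preliminary lemma, using the smoothness and strong-convexity assumptions on $l$ and $\{g_m\}$, compactness of $\mathbb{C},\mathbb{B}$ (and of $\Theta$), and the KKT-based sensitivity expressions of Proposition~\ref{lma:grad_optlayer} to bound the per-step Jacobians $\bigtriangledown_{\lambda_t}x_t$, $\bigtriangledown_{\bm{b}_t}x_t$; chaining these bounds across the recurrence (together with a Lipschitz ML model $f_\theta$) yields the desired uniform constants. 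Everything else is routine concentration and symmetrization.
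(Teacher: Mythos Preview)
Your proposal is correct and follows essentially the same route as the paper: the paper first establishes the uniform Lipschitz bound $|L(\bm{h}_\theta,(\bm{c},\bm{B}))-L(\bm{h}_\theta,(\bm{c}',\bm{B}'))|\le \Gamma_{L,c}\omega_c+\Gamma_{L,b}\omega_b$ as a separate lemma (derived exactly as you outline, by bounding the KKT-based Jacobians of the optimization layer and chaining them through the $N$ recurrences), then invokes the Rademacher generalization bound once for $\hat\theta$ and once for $\theta^*$, uses $L(\bm{h}_{\hat\theta^*},S)\le L(\bm{h}_{\theta^*},S)$, and takes a union bound. Your three-term decomposition, McDiarmid-plus-symmetrization argument, and identification of the Lipschitz lemma as the only nontrivial ingredient match the paper's proof in both structure and content.
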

\begin{proposition}\label{lma:rad_linear}
	If a linear model $f_{\theta}(\bm{v})=\theta^\top\phi(\bm{v}), \|\theta\|\leq Z$ is used as the ML model in \ouralg, the Rademacher complexity $R_n(L\circ \mathbb{H})$ is bounded ny
	$O\left(\frac{ZW}{\sqrt{n}}\right),
	$
	where $W=\sup_{\bm{v}}\sqrt{\phi(\bm{v})^\top \phi(\bm{v}})$.
	If a neural network, where the depth is $K$, the width is less than $u$,  activation functions are $\Gamma_{\alpha}$-Lipschitz continuous, and the spectrum norm of the weight matrix in layer $k$ with is less than $Z_k$, is used as the ML model,  the Rademacher complexity $R_n(L\circ \mathbb{H})$ is bounded by
	$
	R_n(L\circ \mathbb{H})
	\leq O\left(\frac{K^{3/2}u\Gamma_{\alpha}(\beta_b+\beta_c)\prod_{k=1}^{K}Z_k}{\sqrt{n}} \right),
	$
	where $\beta_b, \beta_c$ are the largest $l_2$-norm of $\bm{B}$ and $\bm{c}$. 	The notation $O$ in this proposition indicates the scaling relying on $M$, $N$, the Lipschitz constants of loss function $l$, constraint function $g$, optimization layer $p$ and neural network $f$.
\end{proposition}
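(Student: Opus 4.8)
The plan is to bound $R_n(L\circ\mathbb{H})$ by peeling the unrolled pipeline layer by layer with the (vector-valued) contraction inequality, reducing the estimate to the Rademacher complexity of a \emph{single} application of the ML model class $\{f_\theta:\theta\in\Theta\}$ on inputs of bounded norm, and then plugging in the standard Rademacher bounds for linear predictors and for spectral-norm-controlled neural networks. Write $R_n(L\circ\mathbb{H})=\mathbb{E}_{S,\sigma}\sup_{\theta}\frac1n\sum_{i=1}^n\sigma_i\sum_{t=1}^N l(x_{i,t},c_{i,t})$, where $x_{i,t}$ follows the recurrence $\lambda_{i,t}=f_\theta(\bm b_{i,t},c_{i,t},\bar t)$, $x_{i,t}=p(c_{i,t},\lambda_{i,t},\bm b_{i,t})$, $\bm b_{i,t+1}=\bm b_{i,t}-\bm g(x_{i,t},c_{i,t})$, $\bm b_{i,1}=\bm B_i$. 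By sub-additivity of the supremum and linearity of the Rademacher sum, $R_n(L\circ\mathbb{H})\le\sum_{t=1}^N R_n(l\circ\mathcal{X}_t)$ with $\mathcal{X}_t=\{(\bm c,\bm B)\mapsto x_t(\theta;\bm c,\bm B):\theta\in\Theta\}$ the class of step-$t$ actions.

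First I peel off the fixed Lipschitz maps. Since $l(\cdot,c)$ is $\Gamma_l$-Lipschitz, $p(c,\cdot,\cdot)$ is jointly Lipschitz in $(\lambda,\bm b)$ (its Jacobians $\nabla_\lambda x,\nabla_{\bm b}x$ are the expressions in Proposition~\ref{lma:grad_optlayer} and are bounded whenever the strong-convexity assumption makes $\Delta_{11}$ and the Schur complement invertible over the compact parameter range), and $\bm g(\cdot,c)$ is $\Gamma_g$-Lipschitz, repeated contraction gives, up to absolute constants,
\[
R_n(l\circ\mathcal{X}_t)\ \lesssim\ \Gamma_l\Gamma_p\Big(R_n(\mathcal{F}_t)+\Gamma_g\textstyle\sum_{s<t}R_n(\mathcal{X}_s)\Big),
\]
where $\mathcal{F}_t=\{(\bm c,\bm B)\mapsto f_\theta(\bm b_t,c_t,\bar t):\theta\in\Theta\}$ and the $\bm B$ term disappears because $\bm B$ does not depend on $\theta$. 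Because the ML model is re-applied at every step with the $\theta$-dependent state $\bm b_t$, the term $R_n(\mathcal{F}_t)$ is itself a recurrent object; handling it by an unroll-and-telescope induction — at each of the at most $t$ unrollings one pays the Lipschitz constant $\Gamma_f$ of $f_\theta$ in its state input and adds the ``fresh'' input-layer term — gives $R_n(\mathcal{F}_t)\lesssim \rho(t,\Gamma_f,\Gamma_g,\Gamma_p)\,\widehat R_n$, where $\widehat R_n$ is the Rademacher complexity of one application of $\{f_\theta\}$ at inputs $(\bm b,c,\bar t)$ of norm $\le\beta_b+\beta_c+1$ and $\rho$ is finite (polynomial, or at worst exponential, in $N$ and the Lipschitz constants). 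Substituting back produces a triangular linear recursion in $\{R_n(\mathcal{X}_t)\}_{t\le N}$ whose solution is $\sum_{t=1}^N R_n(\mathcal{X}_t)\le\kappa\,\widehat R_n$ with $\kappa=\kappa(M,N,\Gamma_l,\Gamma_g,\Gamma_p,\Gamma_f)$; this $\kappa$, together with the $\sqrt M,\sqrt d$ factors coming from the vector-valued contraction, is precisely what the $O(\cdot)$ in the statement absorbs.

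It then remains to bound $\widehat R_n$. The budget argument obeys $\|\bm b_t\|\le\|\bm B\|\le\beta_b$ (budgets only decrease), $\|c_t\|\le\beta_c$, and $\bar t\in[0,1]$, so the input has norm $\le\beta_b+\beta_c+1$. For the \emph{linear} model $f_\theta(\bm v)=\theta^\top\phi(\bm v)$ with $\|\theta\|\le Z$, the textbook linear-class bound (Cauchy--Schwarz on $\langle\theta,\cdot\rangle$ followed by Jensen on the Rademacher average, $\mathbb{E}_\sigma\|\frac1n\sum_i\sigma_i\phi(\bm v_i)\|\le\sqrt{\sup\|\phi\|^2/n}$) gives $\widehat R_n\le ZW/\sqrt n$ with $W=\sup_{\bm v}\|\phi(\bm v)\|$, hence $R_n(L\circ\mathbb{H})=O(ZW/\sqrt n)$. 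For the \emph{neural network} model I would invoke a spectral-norm Rademacher bound for a depth-$K$ network of width at most $u$ with $\Gamma_\alpha$-Lipschitz activations, which scales like $\frac{\mathrm{poly}(K,u)\,\Gamma_\alpha\,\|\text{input}\|\,\prod_{k=1}^K Z_k}{\sqrt n}$; a peeling/covering argument that contributes one $\sqrt K$ per layer from the telescoping plus a further $\sqrt K$ from the union over the $K$ layers yields the $K^{3/2}$ factor, and bounding the input norm by $\beta_b+\beta_c+1=O(\beta_b+\beta_c)$ gives $R_n(L\circ\mathbb{H})\le O\!\big(K^{3/2}u\,\Gamma_\alpha(\beta_b+\beta_c)\prod_{k=1}^K Z_k/\sqrt n\big)$.

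The main obstacle is the recurrence: because $\bm b_t$ feeds back into the ML model and depends on $\theta$ through every earlier decision, the unrolled architecture behaves like a recurrent network, and a naive covering-number estimate of $\theta\mapsto L(\bm h_\theta,\cdot)$ would carry a spurious $\sqrt{\dim\theta}$ factor that destroys the claimed $ZW/\sqrt n$ and $\prod_k Z_k/\sqrt n$ scalings. The whole point of the layer-by-layer peeling is to make the ML weight enter only through the single-application term $\widehat R_n$ (linearly in $Z$, or spectrally through $\prod_k Z_k$) while all cross-step coupling is pushed into the $N$-dependent constant $\kappa$. Verifying that this decomposition is legitimate — in particular, that the contraction lemma applies to $l$, $p$ and $\bm g$ with the Lipschitz constants furnished by Proposition~\ref{lma:grad_optlayer}, and that the resulting system of inequalities in the $R_n(\mathcal{X}_t)$'s is triangular and hence solvable in closed form — is the technical heart of the argument; tracking the dimension and Lipschitz constants through the peeling is routine but tedious.
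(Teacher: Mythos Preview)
Your proposal takes essentially the same route as the paper: reduce $R_n(L\circ\mathbb H)$ via the vector-valued contraction lemma to the Rademacher complexity of a \emph{single} application of the ML-model class, then plug in the standard linear-class bound (Cauchy--Schwarz plus Jensen, giving $ZW/\sqrt n$) and the spectral-norm neural-network bound.

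Two differences in execution are worth noting. First, the paper does not set up your triangular recursion in $\{R_n(\mathcal X_t)\}$; instead it argues directly that the step-$t$ loss is $\Upsilon_t$-Lipschitz in the \emph{first} ML output $\lambda_1$ (using the sensitivity inequalities from Lemma~\ref{lma:Lipschitzofloss}), and then applies one vector contraction to land immediately on $\sum_m R_n(\mathbb F_{1,m})$, the class of first-step outputs whose inputs $(\bm B,c_1,\bar 1)$ are $\theta$-independent. Your unroll-and-telescope induction is the more explicit way to justify that reduction, and it makes clearer where the exponential-in-$N$ constant $\kappa$ comes from; the paper simply hides that constant in $\sum_t\Upsilon_t$. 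Second, for the neural-network case the paper does not invoke a generic ``spectral-norm Rademacher bound'': it uses the Bartlett--Foster--Telgarsky covering-number estimate and the Dudley entropy integral. In particular, the $K^{3/2}$ factor does not arise from ``one $\sqrt K$ per layer plus one $\sqrt K$ from a union bound'' as you suggest; it comes from the cube $(\sum_{k=1}^K(\cdot)^{2/3})^3$ in the BFT log-covering bound, which after bounding each summand by a constant yields $K^3$ inside the log and hence $K^{3/2}$ after Dudley. The width factor $u$ similarly enters through the crude bound $\|(\theta_k-\bar\theta_k)^\top\|_{2,1}\le 2uZ_k$ on the reference-matrix term. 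If you want to match the paper's constants, that is the calculation to reproduce rather than a peeling argument.
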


\begin{remark}
	Theorem~\ref{thm:generalization} shows that the performance gap between \ouralg and the pseudo-oracle $h_{\theta^*}$ in terms of expected loss is bounded by the empirical training error,
	plus a generalization error which relies on the Radmacher complexity, the LipSchitz constant of the online optimizer, and the number of training samples. The Radmacher complexity indicates the richness of the loss function space with respect to the online optimizer space $\mathbb{H}$ and the distribution $\mathcal{P}$ and is further bounded in Proposition~\ref{lma:rad_linear}. From the bound of Radmacher complexity,  we find that for both linear model and neural network, the generalization error increases with episode length $N$ and the number of constraints $M$. Besides, the Rademacher complexity relies on the ML model designs. For example, if a linear model is used as the ML model, the generalization error relies on the norm bounds of feature mapping  and linear weights, while if a neural network is used as the ML model, the generalization error is related to  the network length , width, the smoothness of activation functions, and spectral norm bounds of the weights in each layer.
	The last term of the expected cost bound is scaled by $\left( \Gamma_{L,c}\omega_c+\Gamma_{L,b}\omega_b\right)$ which indicates the sensitivity of the loss when the inputs are changed.
	This term highly depends on the Lipschitz constants of the total loss regarding the two inputs. 
	Our proof in the appendix gives the bound of 
	the Lipschitz constants of the loss
	regarding its inputs. \hfill{$\square$}
\end{remark}

\begin{proposition}[Average Cost of Online Training]\label{thm:generalization_sgd}
Assume that for each round $i$, $\triangledown_{\theta} \mathrm{E}[L(h_{\hat{\theta}_i})]$ is $\Gamma_{\triangledown L,\theta}$-Lipschitz continuous, and  the Polyak-Lojasiewicz inequality is satisfied, i.e. $\exists \varsigma>0, \left\|\triangledown_{\theta} \mathrm{E}[L(h_{\hat{\theta}_i})]\right\|^2\geq 2\varsigma \left(\mathrm{E}[L(h_{\hat{\theta}_i})]-\mathrm{E}[L(h_{\theta^*})] \right)$. Also, assume that for the distribution of $\bm{c}_i, $, $\exists \iota_G>\iota >0$ such that $\forall i$, $\left\langle\triangledown_{\theta} \mathrm{E}[L(h_{\hat{\theta}_i})],\mathrm{E}\left[\triangledown_{\theta} L(h_{\hat{\theta}_i},\bm{c}_i)\right]\right\rangle\geq \iota\left\| \triangledown_{\theta} \mathrm{E}[L(h_{\hat{\theta}_i})] \right\|^2_2$, $\|\mathrm{E}[\triangledown_{\theta} L(h_{\hat{\theta}_i},\bm{c}_i)] \|_2\leq \iota_G \left\| \triangledown_{\theta} \mathrm{E}[L(h_{\hat{\theta}_i})] \right\|_2$,  and there exist $  \varpi, \varpi_V>0$ such that $\forall i$, $\mathrm{Var}\left[\triangledown_{\theta} L(h_{\hat{\theta}_i},\bm{c}_i)\right]\leq \varpi+\varpi_V\left\| \triangledown_{\theta} \mathrm{E}[L(h_{\hat{\theta}_i})] \right\|^2_2$. Then with the same notations as Theorem \ref{thm:generalization}, for the online setting where \ouralg is trained by SGD with stepsize $0< \bar{\alpha}\leq \frac{\iota}{\Gamma_{\triangledown L,\theta}(\varpi_V+\iota_G^2)}$, with probability at least $1-\delta,\delta\in(0,1)$,
we have for each round $i$
\begin{equation}\label{eqn:theorem_bound_sgd}
	\begin{split}
	\!\mathrm{E}\!\left[\! L(\bm{h}_{\hat{\theta_i}})\!-\! L(\bm{h}_{\theta^*})\right]\! 
 \leq \!\frac{\bar{\alpha}\Gamma_{\triangledown L, \theta}\varpi}{2\iota\varsigma}+ O\left(\left(1-\bar{\alpha}\iota\varsigma\right)^i\right),
	\end{split}
\end{equation}
where the expectation is taken over the randomness of context $\bm{c}$ and budget $\bm{B}$ and the model weight $\hat{\theta}_i$ by SGD.
\end{proposition}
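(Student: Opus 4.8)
The plan is to treat this as a standard analysis of stochastic gradient descent (SGD) under the Polyak--Lojasiewicz (PL) condition, but with a \emph{biased} stochastic gradient, so the argument follows the well-known template for biased SGD once the right quantities are identified. Write $F(\theta):=\mathrm{E}[L(\bm{h}_{\theta})]$ for the population objective, so that $\theta^*$ is its minimizer and the left-hand side of \eqref{eqn:theorem_bound_sgd} equals $\mathrm{E}[F(\hat{\theta}_i)]-F(\theta^*)$. Let $\bm{g}_i:=\triangledown_{\theta}L(\bm{h}_{\hat{\theta}_{i-1}},\bm{c}_i)$ be the gradient actually used in round $i$, let $\mathcal{F}_{i-1}:=\sigma(\bm{c}_1,\bm{B}_1,\dots,\bm{c}_{i-1},\bm{B}_{i-1})$, and note that $\hat{\theta}_{i-1}$ is $\mathcal{F}_{i-1}$-measurable while $(\bm{c}_i,\bm{B}_i)$ is drawn fresh and hence independent of $\mathcal{F}_{i-1}$; all conditional expectations below are over this fresh draw.

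First I would apply the $\Gamma_{\triangledown L,\theta}$-smoothness of $F$ to obtain the descent lemma $F(\hat{\theta}_i)\le F(\hat{\theta}_{i-1})+\langle\triangledown_\theta F(\hat{\theta}_{i-1}),\hat{\theta}_i-\hat{\theta}_{i-1}\rangle+\tfrac{\Gamma_{\triangledown L,\theta}}{2}\|\hat{\theta}_i-\hat{\theta}_{i-1}\|^2$, substitute the update $\hat{\theta}_i-\hat{\theta}_{i-1}=-\bar{\alpha}\bm{g}_i$, and take $\mathrm{E}[\,\cdot\mid\mathcal{F}_{i-1}]$. The cross term becomes $-\bar{\alpha}\langle\triangledown_\theta F(\hat{\theta}_{i-1}),\mathrm{E}[\bm{g}_i\mid\mathcal{F}_{i-1}]\rangle$, which by the first-moment assumption is at most $-\bar{\alpha}\iota\|\triangledown_\theta F(\hat{\theta}_{i-1})\|^2$; the quadratic term is handled via the decomposition $\mathrm{E}[\|\bm{g}_i\|^2\mid\mathcal{F}_{i-1}]=\|\mathrm{E}[\bm{g}_i\mid\mathcal{F}_{i-1}]\|^2+\mathrm{Var}[\bm{g}_i\mid\mathcal{F}_{i-1}]\le(\iota_G^2+\varpi_V)\|\triangledown_\theta F(\hat{\theta}_{i-1})\|^2+\varpi$ using the norm and variance assumptions. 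This yields the one-step bound
\[
\mathrm{E}[F(\hat{\theta}_i)\mid\mathcal{F}_{i-1}]\le F(\hat{\theta}_{i-1})-\bar{\alpha}\Big(\iota-\tfrac{\bar{\alpha}\Gamma_{\triangledown L,\theta}}{2}(\iota_G^2+\varpi_V)\Big)\|\triangledown_\theta F(\hat{\theta}_{i-1})\|^2+\tfrac{\bar{\alpha}^2\Gamma_{\triangledown L,\theta}\varpi}{2}.
\]

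Next, the stepsize constraint $\bar{\alpha}\le\iota/(\Gamma_{\triangledown L,\theta}(\varpi_V+\iota_G^2))$ forces $\iota-\tfrac{\bar{\alpha}\Gamma_{\triangledown L,\theta}}{2}(\iota_G^2+\varpi_V)\ge\iota/2$, so the gradient-norm coefficient is at least $\bar{\alpha}\iota/2$; applying the PL inequality $\|\triangledown_\theta F(\hat{\theta}_{i-1})\|^2\ge 2\varsigma(F(\hat{\theta}_{i-1})-F(\theta^*))$ then gives the linear contraction $\mathrm{E}[F(\hat{\theta}_i)-F(\theta^*)\mid\mathcal{F}_{i-1}]\le(1-\bar{\alpha}\iota\varsigma)(F(\hat{\theta}_{i-1})-F(\theta^*))+\tfrac{\bar{\alpha}^2\Gamma_{\triangledown L,\theta}\varpi}{2}$. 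Taking total expectations and unrolling this recursion over $i$ rounds, the accumulated noise is a geometric series; since a $\Gamma_{\triangledown L,\theta}$-smooth PL function satisfies $\varsigma\le\Gamma_{\triangledown L,\theta}$ and the hypotheses already assume $\iota<\iota_G$ (hence $\iota^2<\iota_G^2+\varpi_V$), one checks $\bar{\alpha}\iota\varsigma\in(0,1]$, so the series is bounded by $\tfrac{\bar{\alpha}^2\Gamma_{\triangledown L,\theta}\varpi}{2}\cdot\tfrac{1}{\bar{\alpha}\iota\varsigma}=\tfrac{\bar{\alpha}\Gamma_{\triangledown L,\theta}\varpi}{2\iota\varsigma}$, while the remaining term decays as $(1-\bar{\alpha}\iota\varsigma)^i(F(\hat{\theta}_0)-F(\theta^*))=O((1-\bar{\alpha}\iota\varsigma)^i)$, which is exactly \eqref{eqn:theorem_bound_sgd}.

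The main obstacle is conceptual rather than computational: the gradient driving the iteration is the back-propagated gradient $\triangledown_\theta L(\bm{h}_{\hat{\theta}_{i-1}},\bm{c}_i)$ through the full unrolled pipeline, including the implicit optimization layer differentiated via the KKT conditions of Proposition~\ref{lma:grad_optlayer}, and there is no reason for $\mathrm{E}[\bm{g}_i\mid\mathcal{F}_{i-1}]$ to equal $\triangledown_\theta F(\hat{\theta}_{i-1})$; this is precisely why the hypotheses are phrased as the correlation lower bound $\iota$ and the norm upper bound $\iota_G$ on $\mathrm{E}[\bm{g}_i\mid\mathcal{F}_{i-1}]$ rather than unbiasedness, and the bookkeeping must carry these two constants separately, since they both surface in the bias-plus-variance split of $\mathrm{E}[\|\bm{g}_i\|^2\mid\mathcal{F}_{i-1}]$. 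A secondary point is the filtration argument ensuring the conditional expectations are legitimate (freshness of $(\bm{c}_i,\bm{B}_i)$), and the ``with probability at least $1-\delta$'' qualifier, which mirrors Theorem~\ref{thm:generalization} and can be absorbed by restricting to the event on which the KKT-based gradients of Proposition~\ref{lma:condition_optlayer} are exact; conditioned on that event the recursion above is unchanged.
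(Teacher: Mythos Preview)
Your proposal is correct and follows essentially the same approach as the paper: the paper invokes Lemma~4.4 of \cite{optimization_learning_bottou2018optimization} to obtain precisely your one-step inequality, then applies the stepsize bound and PL to get the same contraction $(1-\bar{\alpha}\iota\varsigma)$, and unrolls by subtracting the fixed point $\frac{\bar{\alpha}\Gamma_{\triangledown L,\theta}\varpi}{2\iota\varsigma}$ rather than summing the geometric series directly, which is equivalent. Your treatment is in fact more careful than the paper's on the filtration structure and on justifying $\bar{\alpha}\iota\varsigma\in(0,1]$, points the paper passes over silently.
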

Proposition~\ref{thm:generalization_sgd}, with proof in appendix \ref{sec:generalization_proof}, bounds the expected loss gap between the learned weight $\hat{\theta}_i$ by SGD and the optimal weight $\theta^*$ defined in Definition \ref{def:optimal_weight}. The first non-reducible term is caused by the randomness of the context and budget $\{(\bm{c}_i, \bm{B}_i)\}$. The second term decreases with time and the convergence rate depends on the randomness of the sequence and the parameter $\varsigma$ in  the Polyak-Lojasiewicz inequality.

\section{Numerical Results}\label{sec:simulation}

Weighted fairness is a classic performance metric
in the resource allocation literature \cite{MungChiang_Fairness_Infocom_2010_lan2010axiomatic}, including
fair  allocation in computer systems
\cite{resource_fairness_ghodsi2011dominant}
and 
economics \cite{hylland1979efficient}.
Here, we consider a general \emph{online} 
setting.
A total of $N$ jobs arrive sequentially, and job $t$ has a weight $c_{t}\geq0$.
The agent allocates resource $x_t\geq0$ to job $t$ at each step $t$. 
We consider the commonly-used weighted fairness  $\sum_{t=1}^Nc_t\log( x_t)$
\cite{MungChiang_Fairness_Infocom_2010_lan2010axiomatic}.
 We create the training and testing samples based on the  Azure cloud workload dataset,
which contains the average CPU reading for tasks at each step \cite{azure_cloud_trace_shahrad2020serverless}.
For detailed settings, please refer to the appendix \ref{sec:simulation_append}. 

We consider several baseline algorithms as follows. 
The Offline Optimal Oracle \textbf{OPT} is the solution to the problem in Eqn.~\eqref{eqn:target}.  We consider two  heuristics: One is Equal Resource Allocation (\textbf{Equal}) which equally allocates the total resource capacity to $N$ jobs, and another one is Resource Allocation
with Average Long-term Constraints (\textbf{\avg}) which relaxes the
inventory constraints of the weighted fairness problem
as  $\mathrm{E}_{\mathcal{P}}\left[\sum_{t=1}^N x_{t}\right]\leq B$ and uses the optimal Lagrangian multiplier for this relaxed problem
as $\lambda_t$ for online allocation.  We consider two algorithms based on dual mirror descent  which are Dual Gradient Descent (\textbf{\gradient}) and Multiplicative Weight (\textbf{\mirror} ) \cite{OCO_ODMD_balseiro2020dual}.
 To reduce the resource waste after
the last step, we slightly revise \gradient and \mirror by setting
the allocation decision for job $N$ as $\min\left(b_N,x_{\max} \right) $. 
CR-pursuit (\textbf{CR-pursuit}) is the state-of-the-art online
algorithm that makes online actions by tracking a pseudo-optimal algorithm with
a competitive guarantee
\cite{CR_persuit_lin2019competitive,OOIC-M_lin2022competitive}.
 We also compare \ouralg with the end-to-end Reinforcement Learning (\textbf{\rl}). 
A neural network with the same size of the ML model as in \ouralg is used in  
\rl to directly predict the solution $x_t$, given parameter $c_t$ and budget $b_t$ as inputs.
\begin{figure}[t]
\centering
\includegraphics[height=0.18\textwidth]{./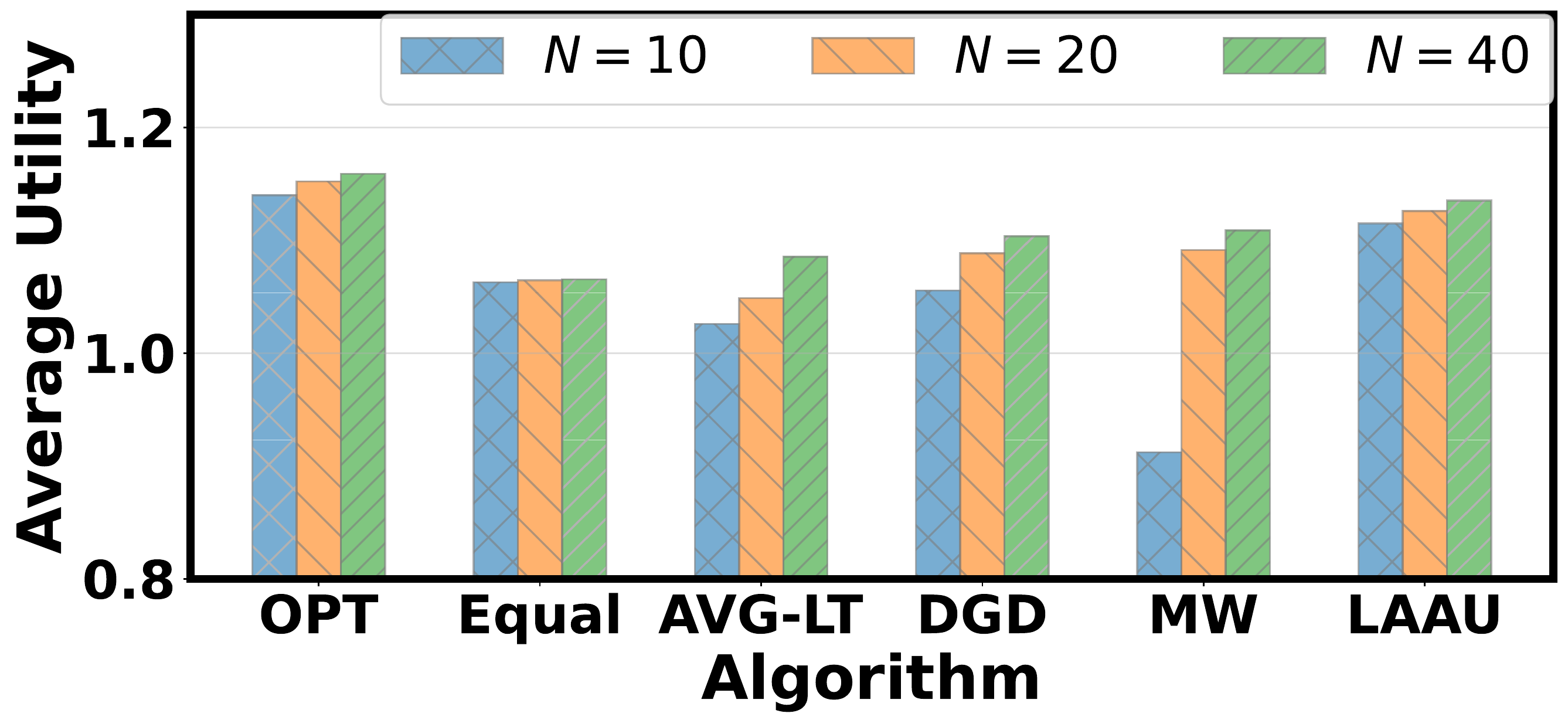}
\caption{Average utility with different episode lengths
}
\label{fig:utility_N}
\end{figure}

\textbf{Average utility.}
We first show in Fig.~\ref{fig:utility_N} the
average utilities (per time step). We do not add the average utility of CR-pursuit in the figure because its average utility for our evaluation instances is as low as 0.562, exceeding the utility range of the figure. 
 Clearly, OPT achieves the highest utility, but it is infeasible
 in practice due to the lack of complete offline information.
We can observe that the average utilities by expert algorithms including \avg, \gradient, \mirror
are even below the average utility by the simple equal allocation (Equal) when the episode length is $N=10$. This is because all the three algorithms are designed for online optimizations with long-term constraints, and not suitable for the more challenging short-term counterparts. By contrast, \ouralg performs well for all cases with large and small episode lengths, and outperforms the other algorithms designed for long-term constraints even when $N$ is as large as 40.
This demonstrates
the power of \ouralg in solving challenging online problems with inventory
constraints. More results,
including detailed comparisons 
between \ouralg and \rl, are given in  
the appendix \ref{sec:simulation_append}. 

\setlength{\columnsep}{10pt}
\begin{wrapfigure}[11]{r}{0.26\textwidth}
\vspace{-0.2cm}
\centering
	\includegraphics[width=0.26\textwidth]{./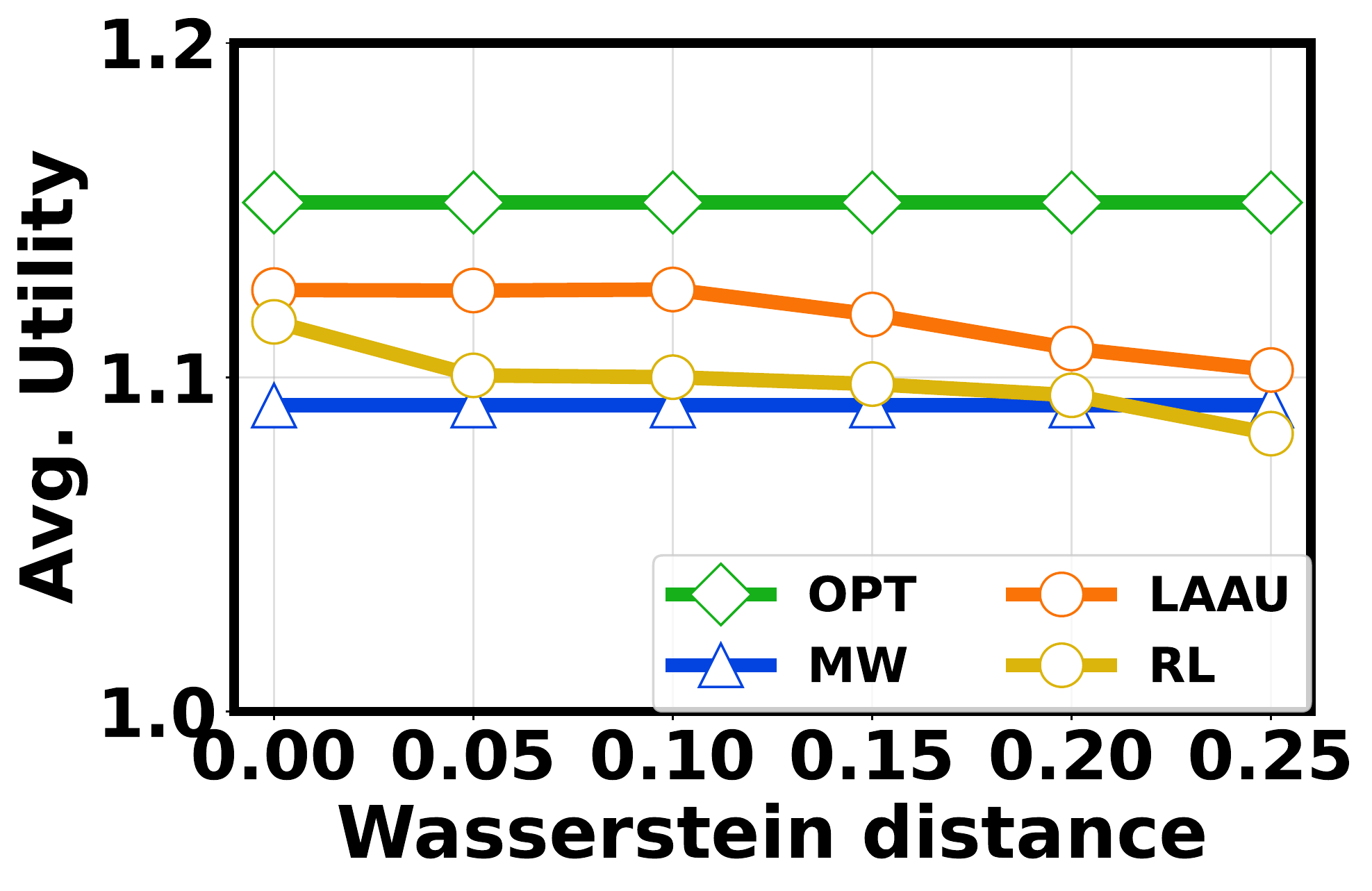}
\vspace{-0.65cm}
\caption{Average utility with different Wasserstein distances.
}
\label{fig:OOD}
\end{wrapfigure}
\textbf{OOD testing.}
In practice, 
the training-testing distributional discrepancy is common
and decreases ML model performance.
We measure the training-testing distributional difference by 
 the Wasserstein distance $d_W$. We choose the setting with episode length $N=20$ to perform the OOD evaluation.
To create the distributional discrepancy, we add i.i.d. Gaussian noise with different means and variances to the training data and keep the testing data the same as the default setting. The distributions with different Wasserstein distances is visualized by  t-SNE \cite{tsne_visualize_2008} in the appendix \ref{sec:simulation_append}.
 The offline optimal and \mirror do not make use of the training distribution, and
hence are not affected.
We can see in Fig.~\ref{fig:OOD} that, the
OOD testing decreases the performance of both \ouralg and \rl, but \ouralg is less affected by OOD testing than \rl and is still higher than that of the baseline \mirror even under large Wasserstein distance. This is because the unrolling architecture in \ouralg has an optimization layer and a budget update layer, 
which are deterministic and have no training parameters, so only the ML model to learn the Lagrangian multiplier is affected by OOD testing. Comparably,  \rl uses an parameterized end-to-end policy model trained on the offline data, so it has worse performance under OOD testing.
This highlights that by the unrolling archietecture, \ouralg has better generalization performance than end-to-end models.

\section{Conclusion}
In this paper, we focus on 
\pone
and propose a novel ML-assisted
unrolling approach based on the online decision pipeline, called \ouralg.
We derive the gradients for back-propagation 
and perform  rigorous analysis 
on the expected cost.
Finally,
 we present numerical results on weighted fairness 
and 
highlight \ouralg significantly outperforms the existing
baselines
in terms
of the average performance.

\section*{Acknowledgements}
This work was supported in part by the NSF under grant CNS-1910208.

\clearpage
\appendix
\onecolumn
\appendix

\section*{Appendix}

\section{Online Training Algorithm}\label{sec:online_training_alg}
The online training method in Section \ref{sec:online_training} is given in the following algorithm.
\begin{algorithm}[!]
	\caption{Online Training of \ouralg}
	\begin{algorithmic}[1]\label{alg:online_training}
 \STATE \textbf{Initialization:} The weight $\hat{\theta}_0$ of the unrolling model, step size $\bar{\alpha}$.
		\WHILE{a new instance $i$ arrives} 
		\STATE Perform $N$-step online inference by Algorithm \ref{alg:online_inference}.
        \STATE Collect context $\bm{c}_i$ and budget $\bm{B}_i$ of instance $i$ and update the model by 
        \[
\hat{\theta}_i=\hat{\theta}_{i-1}-\bar{\alpha}\triangledown_{\theta} L(h_{\hat{\theta}_{i-1}},\bm{c}_i),
        \]
        where $\triangledown_{\theta} L(h_{\hat{\theta}_{i-1}},\bm{c}_i)$ is obtained by back propagation in Eqn. \eqref{eqn:lossgradient}. 
	\ENDWHILE
	\end{algorithmic}
\end{algorithm}

\section{Motivating Examples}\label{sec:example}

We now present the following motivating examples to highlight
the practical relevance of \pone.

\subsection{Virtual machine resource allocation}
Server virtualization plays a fundamental role in cloud computing, reaping the multiplexing benefits and improving resource utilization \cite{VMware_DPM_Whitepaper}.
Each physical server has its own resource configuration, including
CPU, memory, I/O and storage, and is virtualized into  $N$
VMs. Jobs arrive sequentially in an online manner and each needs
to be allocated with a VM. Thus, VM resource allocation is a crucial
problem, affecting the job performance such as latency and throughput,
as well as the resource consumption.
The whole resource allocation process for $N$ VMs hosted on one
physical server can be viewed as an episode in our problem formulation,
while resource allocation for each individual VM is a per-step
online decision. The job features, e.g., data size, are
the online parameters revealed to the resource manager.
Given the online job arrivals, the goal of the agent is to optimize a certain metric of interest, such as the total throughput and fairness.
Crucially, the per-server resource constraints must be satisfied,
resulting in \emph{short}-term constraints as considered in our problem.

\subsection{Rate allocation in unreliable wireless
	networks}
Unlike wired networks, wireless networks are unreliable
and subject to intermittent availability due to various
factors such as background
interference, fading, congestion, and/or priority-based spectrum access.
Here, we use cognitive radio networks as an example,
which provide an important mechanism to
improve the utilization of increasingly wireless spectrum by
allowing unlicensed users to opportunistically access
the available spectrum unoccupied by licensed users
\cite{Cognitive_Radio_Haykin:2006:CRB:2312214.2315147}.
Due to the random time-varying spectrum usage by licensed users,
the available spectrum $\bm{B}$ for unlicensed users is thus
also random: it is available for only one fixed period of time
(i.e., an episode), but may change later.
During each time period, $N$ unlicensed users submit their
spectrum access requests to the spectrum manager
in an online manner at the moment when their own traffic becomes
available.
Each user's traffic is associated with a parameter
$c_{t}$, which indicates the $t$-th user's features
such as priority and channel condition.
The spectrum manager
makes a decision $x_{t}$, specifying the rate allocation
to user $t$. This decision results in a wireless bandwidth usage
that depends on the user's channel condition,
as well as a utility. The goal of the spectrum manager
is to maximize the total utility for these $N$ users
subject
to the total available bandwidth constraint.

\section{Simulation Settings and Additional Empirical Results}\label{sec:simulation_append}
\subsection{Experiment Setups}\label{sec:simulation_settings}
\subsubsection{Problem Setup}
In this section, we give more details about the application and simulation settings of online weighted fairness.
We consider a general \emph{online} resource allocation setting with
a total of $N$ jobs arrive sequentially, and job $t$ has a weight $c_{t}\geq0$, which is not known to the agent until the step $t$.
The agent allocates resource $x_t\geq0$ to job $t$ at each step $t$, without
knowing the weights for future jobs.
At the beginning of step $t=1$, the agent is also informed
of a total resource capacity/budget of $B$ that can be allocated over $N$ time steps.
To measure the resource allocation fairness \cite{MungChiang_Fairness_Infocom_2010_lan2010axiomatic}, we consider a commonly-used utility  $\sum_{\tau=1}^Nc_\tau\log( x_\tau)$, where $\log$ is the natural logarithm, $\bm{x}=[x_1,\cdots, x_N]$ and $\bm{c}=[c_1,\cdots, c_N]$.
Therefore, the problem of {online} resource allocation for weighted fairness with short-term capacity constraints is to maximize the fairness utility subject to the resource constraints:
\begin{equation}\label{eqn:weightedfairness_org}
	\begin{split}
	\max_{\bm{x}}\sum_{t=1}^Nc_t\log( x_t), \text{ s.t. } \sum_{t=1}^Nx_{t}\leq B
	\text{ and } x_t \in\left[x_{\min}, x_{\max}\right],
\end{split}
	\end{equation}
where $x_{\max}=40$ and $x_{\min}=1$ (to ensure a non-negative utility) represent the lowest and highest amounts of resource that can be allocated to each job.

\subsubsection{Datasets}
We perform simulations for the settings with different lengths $N$, which is chosen from $\left\lbrace 10,20,40\right\rbrace$. By default, we set $N=20$. We use the the cloud dataset \cite{azure_cloud_trace_shahrad2020serverless} for simulation. The cloud dataset is a dataset of workload sequences which are $\bm{c}$ in our formulations.   The workload sequences in the cloud dataset are divided into multiple problem instances, each with $N$ time steps, and the workload at step $t$ is $c_t$. In each instance, the total resource capacity/budget is drawn from an independent and uniform distribution in the range $[10N,15N]$, where $N$ is the number of time steps in each episode. We use 5000 offline problem instances  for training and 1000 instances for validation.
For testing, we use another 2000 problem instances from the cloud dataset to evaluate the performances of different algorithms. In the default setting, the training problem instances are drawn
from the same distribution as the testing distribution, but we will also change
the distribution when evaluating the OOD cases.
\begin{figure*}[!t]	
	\centering
	\subfigure[Default.]{
		\includegraphics[width={0.3\textwidth}]{./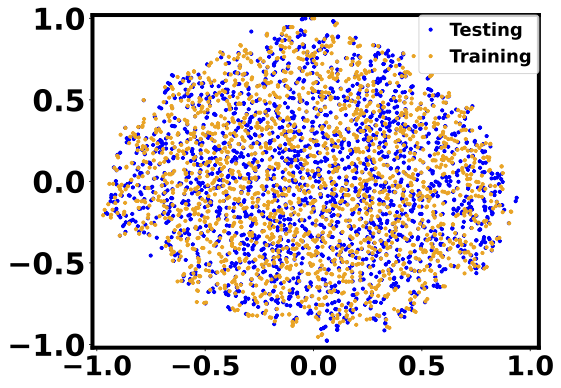}
	}%\hspace{0.005\textwidth}
	\centering
	\subfigure[$d_W=0.05$]{
		\includegraphics[width=0.3\textwidth]{./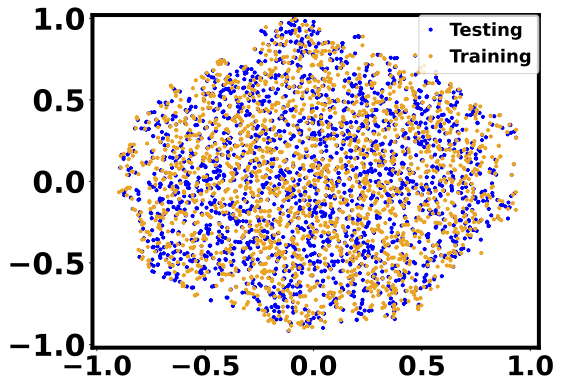}
	}%	
	\centering
	\subfigure[$d_W=0.1$]{
		\includegraphics[width=0.3\textwidth]{./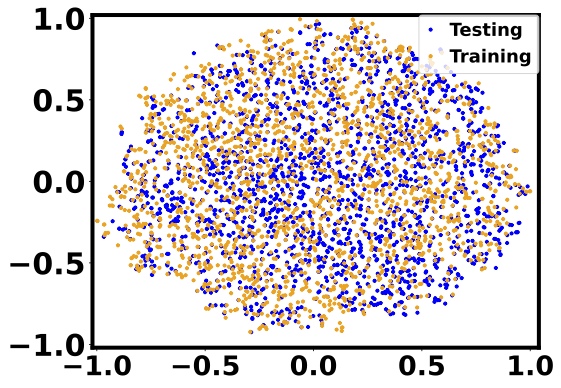}
	}%	
	
	\centering
	\subfigure[$d_W=0.15$]{
		\includegraphics[width=0.3\textwidth]{./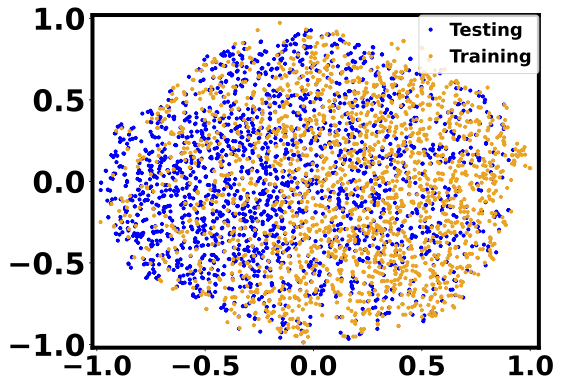}
	}%	
	\centering
	\subfigure[$d_W=0.2$]{
		\includegraphics[width=0.3\textwidth]{./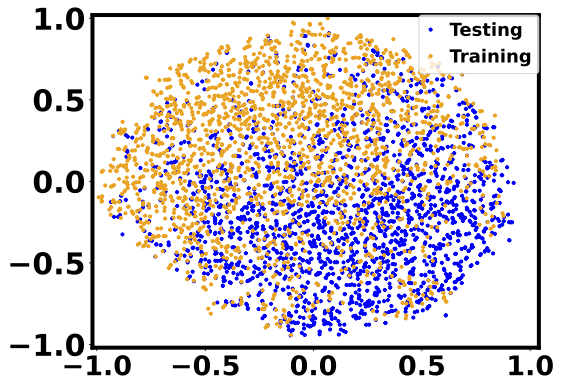}
	}%	
	\centering
	\subfigure[$d_W=0.25$]{
		\includegraphics[width=0.3\textwidth]{./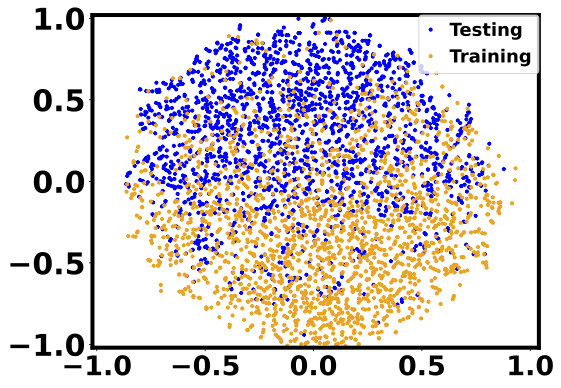}
	}%	
	\vspace{-0.4cm}	
	\caption{Visualization of training-testing distribution discrepancy using t-SNE \cite{tsne_visualize_2008}. Blue and orange dots represent
		the testing and training samples, respectively.}
	\label{fig:tsne_training_testing_difference}
\end{figure*}
\subsubsection{Setup of Training}
We consider using a neural network model as the ML model in \ouralg and \rl. In both \ouralg and \rl, we use fully-connected neural networks with 2 layers,
each with 10 neurons. The neural network is trained by Adam optimizer with batch size as 10. We choose a learning rate of $5\times 10^{-3}$ for 50 epochs and a learning rate of $2.5\times 10^{-3}$ for the following 30 epochs. We run the training and testing experiments on a single CPU.

\subsection{Details of Baselines}
$\bullet$ Optimal Oracle (\textbf{OPT}):
The optimal oracle knows the complete information $(\bm{c},\bm{B})$ for
at the beginning of each episode and solves the problem in Eqn.~\eqref{eqn:target}.

$\bullet$ Equal Resource Allocation (\textbf{Equal}):
The agent equally allocates the total resource capacity to $N$ jobs,
i.e., each job receives $\frac{B}{N}$.

$\bullet$ Resource Allocation
with Average Long-term Constraints (\textbf{\avg}):
\avg minimizes the expected
cost by relaxing the
short-term capacity constraints in Eqn.~\eqref{eqn:weightedfairness_org}
as  $\mathrm{E}_{\mathcal{P}}\left[\sum_{t=1}^N x_{t}\right]\leq B$. Then, it uses the optimal Lagrangian multiplier for this relaxed problem
as $\lambda_t$ for online allocation.

$\bullet$ Online Dual Gradient Descent (\textbf{\gradient} \cite{OCO_ODMD_balseiro2020dual}):
\gradient, as one of the Dual Mirror Descent (DMD) algorithms, is designed to solve online constrained optimization with long-term constraints. It updates the Lagrangian multiplier in each step by gradient descent and then projecting the Lagrangian multiplier to the non-negative space. To make \gradient better for \pone, we slightly revise \gradient on the basis of Algorithm 1 in  \cite{OCO_ODMD_balseiro2020dual} by setting
the allocation decision for job $N$ as $\min\left(b_N,x_{\max} \right) $. The Lyapunov optimization technique \cite{Neely_Booklet} also belongs to \gradient.

$\bullet$ Online Multiplicative Weight (\textbf{\mirror} \cite{OCO_ODMD_balseiro2020dual}):
Similar as \gradient, \mirror is also a MDM method for solving online constrained optimization with long-term constraints. It also estimates the Lagrangian multiplier in an online style, but it updates the Lagrangian multiplier by multiplication \cite{OCO_MULPLICATIVE_arora2012multiplicative}.  Also, we revise \mirror to make it more suitable for OCO-SC in the same way as \gradient.

$\bullet$ Reinforcement Learning (\textbf{\rl}):\rl is a machine learning algorithm based on offline data to learn the decision at each step. \pone can be formulated as a reinforcement learning problem by viewing the remaining budget $\bm{b}_t$ and the parameter $c_t$ as states and the decision $x_t$ as the action. The reward function is the sum of the negative loss, i.e.  $-\sum_{t=1}^Nl(x_{t},c_{t})$. In \rl, we use the neural network  with the same depth and widths as \ouralg, but output of the neural network in \rl is directly the action and no optimization layer like \eqref{eqn:optlayer} is used in \rl. 

For fair comparison, in all the baselines,
if any decision results in capacity violation,
we also apply projection to the decision to meet the total
resource capacity constraint.

\subsection{Settings of OOD Testing}\label{sec:ood_illustration}
The distribution used to generate
synthetic training problem instances
may be different from the true environment distribution,
creating a  distributional discrepancy and decreasing learning-based model performance \cite{DNN_Book_Goodfellow-et-al-2016}.
We measure the training-testing distributional difference in terms
of the Wasserstein distance $d_W$.
To create the distributional difference, we add i.i.d. Gaussian noise with different means and variances to the training data in the default setting.

We visualize in Fig.~\ref{fig:tsne_training_testing_difference} the training-testing distributional differences
using  t-SNE (which converts high-dimensional distribution
into a 2-dimensional one for visualization) \cite{tsne_visualize_2008} under different
Wasserstein distances. We can see that when $d_W=0.25$, the distributional difference
is fairly large, with well separated testing and training samples.
\subsection{Further Results}

\begin{figure}[!t]
\minipage{0.42\textwidth}
\vspace{-0.26cm}
  \includegraphics[width=\linewidth]{./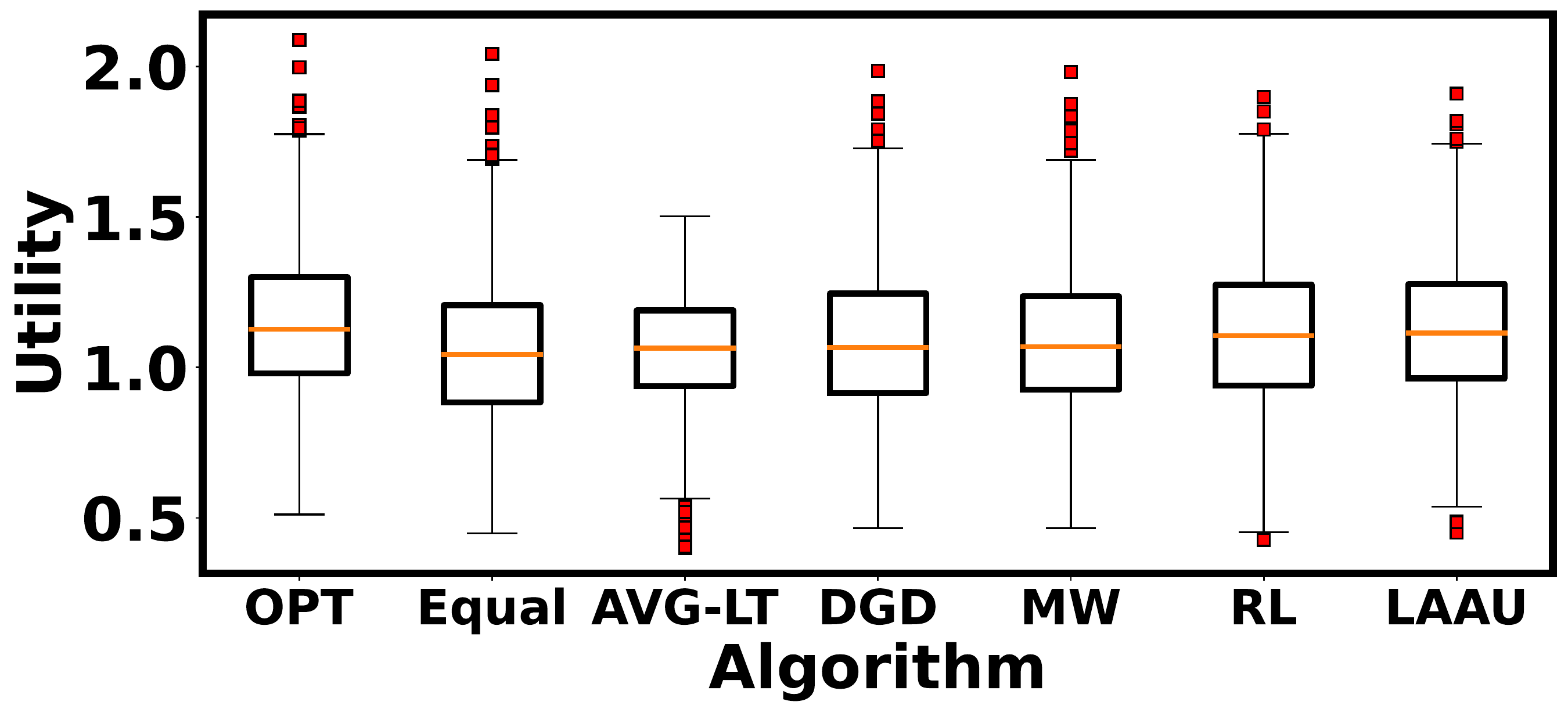}
  \caption{Distribution of Average utility (episode length $N=20$).}\label{fig:utility_N_box}
\endminipage\hfill
\minipage{0.285\textwidth}
  \includegraphics[width=\linewidth]{./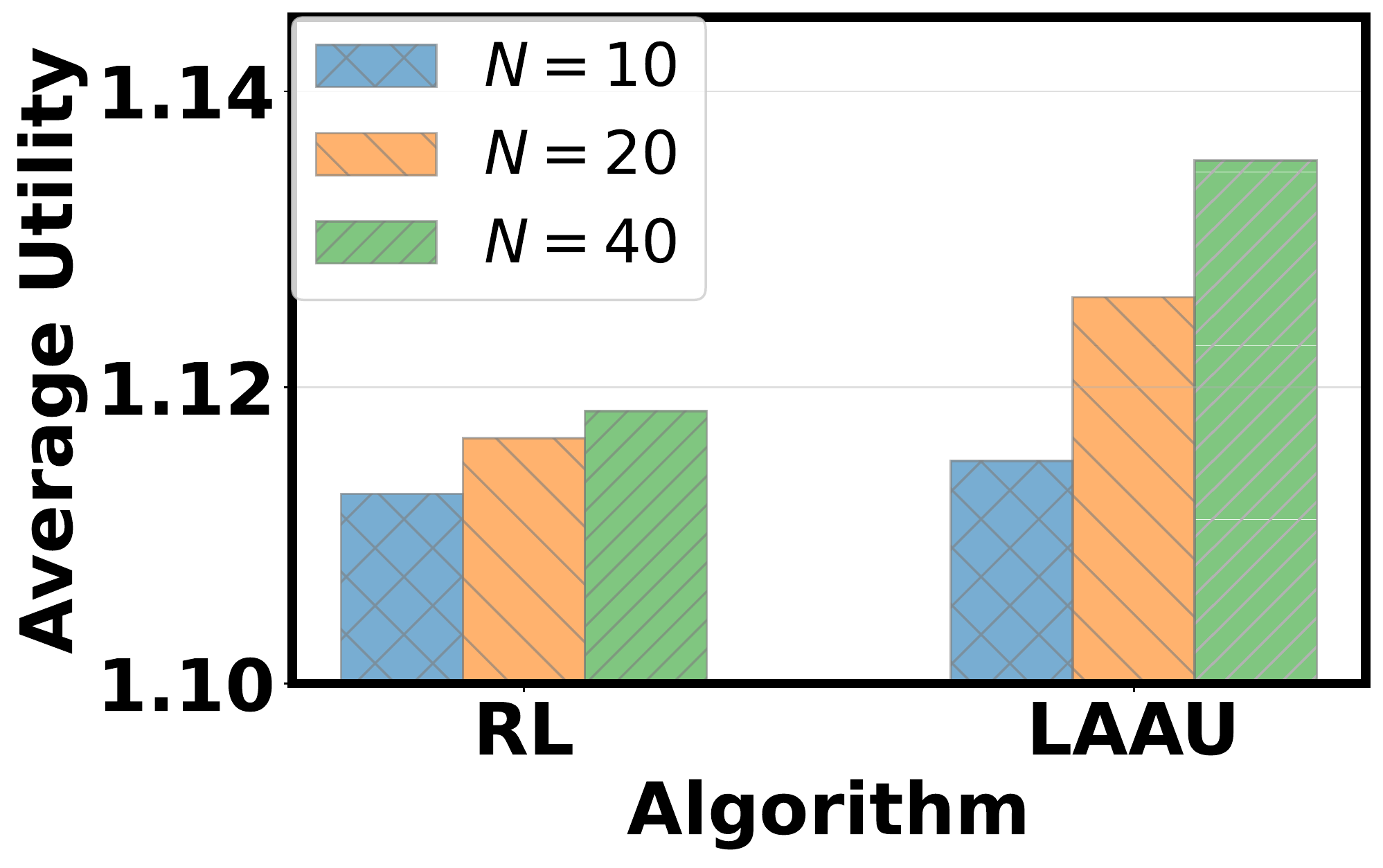}
  \caption{Average utility with different episode lengths.}\label{fig:rl_laau}
\endminipage\hfill
\minipage{0.255\textwidth}%
  \includegraphics[width=\linewidth]{./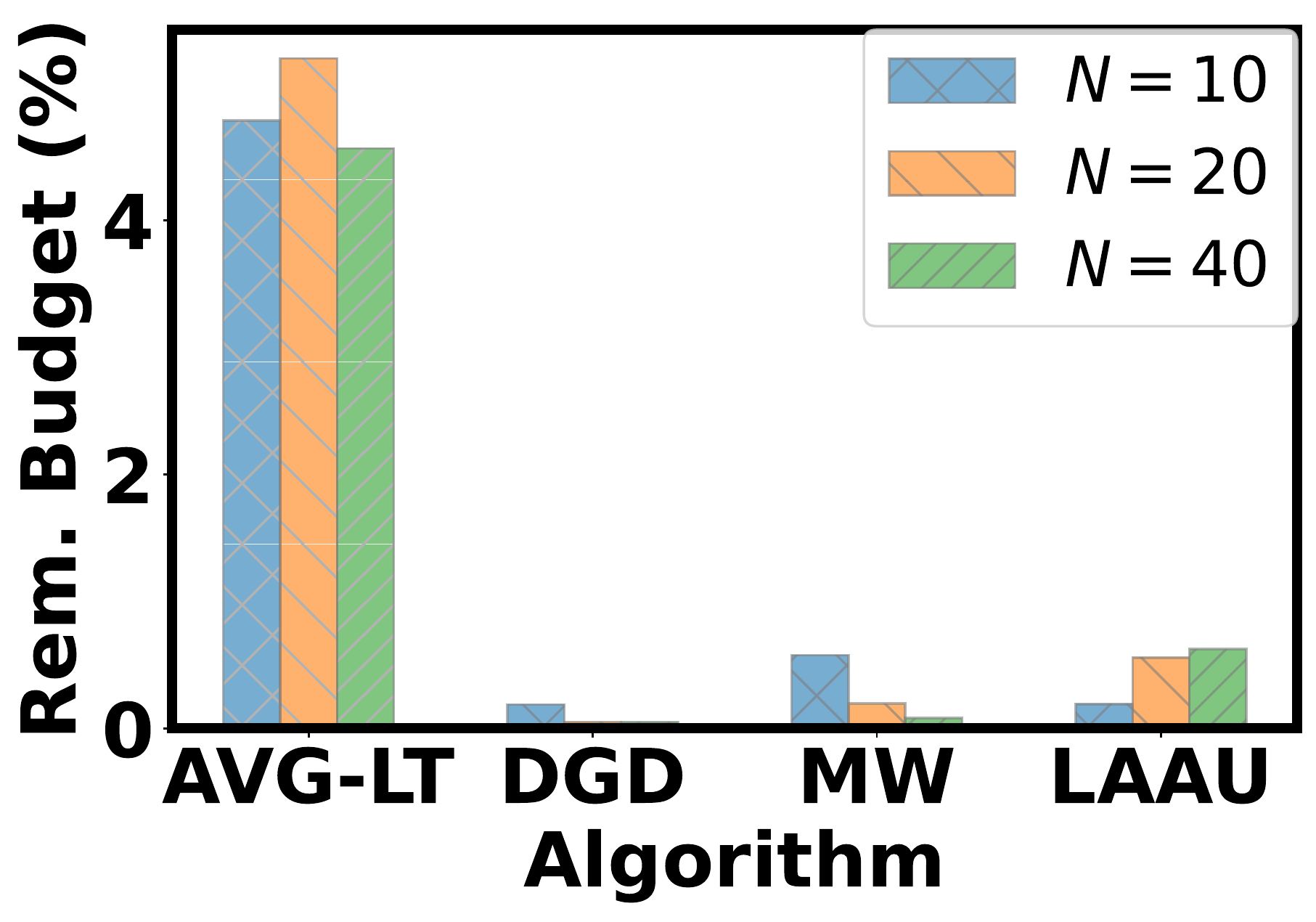}
  \caption{Average remaining budget}\label{fig:remaining_budget}
\endminipage
\end{figure}
\subsubsection{\rl and \ouralg with Different Lengths $N$}\label{sec:rl_laau}
In Figure \ref{fig:rl_laau}, we compare the average utilities of \rl and \ouralg in terms of different number of jobs $N$. We find that when $N$ is larger, \ouralg outperforms \rl more. This is because the unrolling architecture of \ouralg exploits the knowledge from the optimization layers \eqref{eqn:optlayer} and hence benefits generalization. When $N$ increases, \ouralg uses more optimization layers while \rl uses more neural layers, making the advantage of \ouralg compared with \rl become larger. This shows the benefit of the unrolling architecture in reducing generalization error.

\subsubsection{Distribution of Average Utility}
In Figure \ref{fig:utility_N_box}, we show the distribution of utility when the episode length $N=20$. For each algorithm, we give the median, 25 and 75 percentile, maximum and minimum utility, and the outliers. The results show that \ouralg has the best median and 25\% percentile tail utility among all the other baselines except OPT. In addition, the baseline algorithms do not have a substantial advantage over \ouralg in terms of the minimum utility.

\subsubsection{Remaining Resource}

We show in Fig.~\ref{fig:remaining_budget} the average
remaining resources for different algorithms.
OPT and Equal use up all the resources and hence is not shown.
We can find that the remaining resource of \avg is the highest because the Lagrangian multiplier of \avg is constant and cannot be updated according to the input parameters and remaining resource budgets online.  The algorithms based on DMD, i.e. \gradient and \mirror, both have low remaining budgets. 
This is because they aim to converge to the optimal Lagrangian multiplier that optimizes the dual problem of the original problem, 
which can guarantee a low complementary slackness (sum of the products of the Lagrangian multiplier and the remaining resource), as proved in Proposition~5 of \cite{OCO_ODMD_balseiro2020dual}. 
\ouralg, albeit having a slightly higher remaining resource than  \gradient and \mirror when $N$ is large, still achieves a sufficiently low remaining resource (less
than 1\%). The reason for remaining resources is
 the per-step allocation constraint.
Importantly, \ouralg has higher average utility with slightly lower utilized resource capacities than DMD, which further demonstrates the advantages of \ouralg.

\section{Proofs of Conclusions in Section \ref{sec:training}}\label{sec:proofbackpropagation_section5}
In this section ,we first drive the gradients in Proposition \ref{lma:grad_optlayer} and then prove the conditions for differentiable KKT in Proposition \ref{lma:condition_optlayer}.
First, the notations are given as follows.\\
\textbf{Notations:} 
 The gradient of $\bm{x}\in\mathbb{R}^d$ regarding $\bm{y}\in\mathbb{R}^q$ is denoted as $\bigtriangledown_{\bm{y}}\bm{x}\in\mathbb{R}^{d\times q}$.  $\bigtriangledown_{\bm{x}}f(\bm{x})\in\mathbb{R}^{d}$ is the gradient of the function $f(\bm{x})$  regarding $\bm{x}\in\mathbb{R}^d$.
$\bigtriangledown_{\bm{x}\bm{x}}f(\bm{x})\in\mathbb{R}^{d\times d}$ is the Hessian matrix of  $f(\bm{x})$.
 $\mathrm{E}[\,\cdot\,]$ means taking expectation. $\mathrm{diag}(\bm{x})$ returns a diagonal matrix with the diagonal elements from $\bm{x}$.  
\subsection{Proof of Proposition \ref{lma:grad_optlayer}}\label{sec:proof_grad_opt}
\begin{proof}
	To derive the gradients of the optimization layer, we start from the KKT conditions of the corresponding constrained optimization \eqref{eqn:optlayer} as below.
	\begin{equation}\label{eqn:kkt}
		\left\{\begin{matrix}
			\bigtriangledown_{x_t}l\left( x_t, c_t\right)+\sum_{m=1}^M\left(  \lambda_{m,t} +\mu_{m,t}\right)  \bigtriangledown_{x_t}g_m\left(x_{t},c_{t} \right) =\bm{0}_{1\times d}\\
			\mu_{t}\bigodot\left( \bm{g}\left(x_{t},c_{t} \right)-\bm{b}_{t} \right) =\bm{0}_{M}\\
			\bm{g}\left(x_{t},c_{t} \right)\leq \bm{b}_{t}, \mu_t\geq 0.
		\end{matrix}\right.,
	\end{equation}
	where $\bigodot$ is the Hadamard product, $\mu_{t}\in\mathbb{R}^M$ is the optimal dual variable for the constrained optimization \eqref{eqn:optlayer}, the first equation is called stationarity, the second equation is called complementary slackness, and the two inequalities are primal and dual feasibility, respectively.  Note that both the optimal primal variable $x_t$ and the optimal dual variable $\mu_t$ rely on the inputs of \eqref{eqn:optlayer} including $\lambda_t$ and $\bm{b}_t$.
	
	We first derive the gradient of $x_t$ with respect to $\lambda_t$.
	Taking gradient for both sides of the first two equations in \eqref{eqn:kkt} with respect to $\lambda_t$, we get
	\begin{equation}\label{eqn:kkt_1am}
		\begin{split}
			&\begin{bmatrix}
				\bigtriangledown_{x_t x_t}\!l\left( x_t, c_t\right)+\sum_{m=1}^M\!\!\left(  \lambda_{m,t} +\mu_{m,t}\right) \!\!\bigtriangledown_{x_t x_t}\!g_m\!\left(x_{t},c_{t} \right)& \left[ \bigtriangledown_{x_{t}}\bm{g}\left(x_{t},c_{t} \right)\right]^\top \\
				\left[ \mu_{1,t}\left[ \bigtriangledown_{x_{t}} g_1\left(x_{t},c_{t} \right)\right]^\top ,\cdots, \mu_{M,t}\!\left[ \bigtriangledown_{x_{t}} g_M\left(x_{t},c_{t} \right) \right]^\top\right] ^\top& \mathrm{diag}\left( \bm{g}\left(x_{t},c_{t} \right)-\bm{b}_{t}\right)
			\end{bmatrix}
			\begin{bmatrix}
				\bigtriangledown_{\lambda_t}x_{t}\\
				\bigtriangledown_{\lambda_t}\mu_{t}
			\end{bmatrix}\\
			&+
			\begin{bmatrix}
				\left[ \bigtriangledown_{x_{t}}\bm{g}\left(x_{t},c_{t} \right)\right]^\top \\
				\bm{0}_{M\times M}
			\end{bmatrix}=\bm{0}_{(d+M)\times M}.
		\end{split}
	\end{equation}
	Denote the first matrix in Eqn.~\eqref{eqn:kkt_1am} as $\Delta$, and its four blocks are $\Delta_{11}=\bigtriangledown_{x_t x_t}\!l\left( x_t, c_t\right)+\sum_{m=1}^M\!\!\left(  \lambda_{m,t} +\mu_{m,t}\right) \!\!\bigtriangledown_{x_t x_t}\!\!g_m\!\left(x_{t},c f_{t} \right)$, $\Delta_{12}=\left[ \bigtriangledown_{x_{t}}\bm{g}\left(x_{t},c_{t} \right)\right]^\top $, $\Delta_{21}=	\left[ \mu_{1,t}\!\left[ \bigtriangledown_{x_{t}} g_1\left(x_{t},c_{t} \right)\right]^\top ,\cdots, \mu_{M,t}\!\left[ \bigtriangledown_{x_{t}} g_M\left(x_{t},c_{t} \right) \right]^\top\right] ^\top$ and $\Delta_{22}=\mathrm{diag}\left( \bm{g}\left(x_{t},c_{t} \right)-\bm{B}_{t}\right)$. Also, we denote the Schur-complement of $\Delta_{11}$ in the matrix $\Delta$ as $\mathrm{Sc}\left(\Delta, \Delta_{11}\right)=\Delta_{22}-\Delta_{21}\Delta^{-1}_{11}\Delta_{12} $. If $\mathrm{Sc}\left(\Delta, \Delta_{11}\right)$ is invertible (we give the conditions for this in Proposition \ref{lma:condition_optlayer}),  by block-wise matrix inverse, we can solve the gradient $\bigtriangledown_{\lambda_t}x_{t}$ as
	\begin{equation}\label{eqn:gradopt_lambda}
		\bigtriangledown_{\lambda_t}x_{t}= -\left(\Delta_{11}^{-1}+\Delta_{11}^{-1}\Delta_{12}\mathrm{Sc}\left(\Delta, \Delta_{11}\right)^{-1}\Delta_{21}\Delta_{11}^{-1} \right) \Delta_{12}.
	\end{equation}

Next, we drive the gradient of $x_t$ with respect to $\mathbf{b}_t$. Similarly, Taking gradient for both sides of the first two equations in \eqref{eqn:kkt} with respect to $\bm{b}_t$, we get
	\begin{equation}\label{eqn:kkt_b}
	\begin{split}
		&\begin{bmatrix}
			\bigtriangledown_{x_t x_t}l\left( x_t, c_t\right)+\sum_{m=1}^M\!\left(  \lambda_{m,t} +\mu_{m,t}\right) \!\bigtriangledown_{x_t x_t}\!g_m\!\left(x_{t},c_{t} \right)& \left[ \bigtriangledown_{x_{t}}\bm{g}\left(x_{t},c_{t} \right)\right]^\top \\
			\left[ \mu_{1,t}\!\left[ \bigtriangledown_{x_{t}} g_1\left(x_{t},c_{t} \right)\right]^\top ,\cdots, \mu_{M,t}\!\left[ \bigtriangledown_{x_{t}} g_M\left(x_{t},c_{t} \right) \right]^\top\right] ^\top& \mathrm{diag}\left( \bm{g}\left(x_{t},c_{t} \right)-\bm{b}_{t}\right)
		\end{bmatrix}
		\begin{bmatrix}
			\bigtriangledown_{\bm{b}_t}x_{t}\\
			\bigtriangledown_{\bm{b}_t}\mu_{t}
		\end{bmatrix}\\
		&+
		\begin{bmatrix}
		    \bm{0}_{d\times M}\\
			-\mathrm{diag}(\mu_t)
		\end{bmatrix}=\bm{0}_{(d+M)\times M}.
	\end{split}
\end{equation}
Thus by solving Eqn.\eqref {eqn:kkt_b} with block-wise matrix inverse lemma, we have
\begin{equation}
\bigtriangledown_{\bm{b}_t}x_{t}= -\Delta_{11}^{-1}\Delta_{12}\mathrm{Sc}\left(\Delta, \Delta_{11}\right)^{-1}\mathrm{diag}(\mu_t).
\end{equation}
\end{proof}

\subsection{Proof of Proposition ~\ref{lma:condition_optlayer}}\label{sec:proof_grad_condition}
\begin{proof}
	By the proof in Section \ref{sec:proof_grad_opt} to derive the gradients in Proposition~\ref{lma:grad_optlayer},  we can find that if $\Delta_{11}$ and its Schur-complement $\mathrm{Sc}(\Delta,\Delta_{11})$ are invertible and the norm of $\Delta_{12}$ and $\Delta_{21}$ is finite, then KKT based differentiation is viable.
	
	First, if the norms of the constraint functions $\left\| \bigtriangledown_{x_t}g_m\left( x_t, c_t\right)\right\|_2$ are bounded, then  the norm of $\Delta_{12}$ and $\Delta_{21}$ is finite.
	Then, we give the condition that $\Delta_{11}$ is invertible. Since $\Delta_{11}$ is the Hessian matrix of the Lagrangian relaxed objective $l\left( x_t, c_t\right)+\sum_{m=1}^M\!\!\left(  \lambda_{m,t} +\mu_{m,t}\right) g_m\left(x_{t},\bm{c}_{t} \right)$, $\Delta_{11}$ is revertible only if it is positive definite. Given that $\lambda_{m,t}$ which is the output of machine learning model can be forced to be positive, we require that the loss function $l$ or any constraint function $g_m,m=1,\cdots, M$ is strongly convex with respect to $x_t$, which guarantees that $\bigtriangledown_{x_t,x_t}l\left( x_t, c_t\right)$ or $\bigtriangledown_{x_t,x_t}g_m\left(x_{t},\bm{c}_{t} \right)$ for any $m=1\dots, M$ is positive definite.

		Finally, we prove the conditions can guarantee that $\mathrm{Sc}(\Delta,\Delta_{11})$ is invertible.  Here for ease of notations, we omit the time step indices and the inputs of $g_m(x_t,c_t)$ and write $g_m(x_t,c_t)$  as $g_m$, $\mu_{m,t}$ as $\mu_m$. Remember that $\mathcal{A}$ is the index set of constraints that are activated. According to complementary slackness in KKT conditions \eqref{eqn:kkt}, we know that if $m\notin \mathcal{A}$, i.e. $g_m<b_{m}$, then $\mu_{m}=0$. Here we denote $|\mathcal{A}|=M'$, i.e. $M'$ constraints are activated. To prove the condition, we reorder the $M$ constraints, change the indices of them and let $g_{m}<b_m$ and $\mu_m=0$ for $m=1\cdots, M-M'$. Now $\mathcal{A}=\{M-M'+1,\cdots, M\}$. From the conditions that $\mu_m\neq0$ if $m\in\mathcal{A}$, we have  $\mu_m>0$ and $g_{m}=b_m$  for $m=M-M'+1,\cdots, M$. In other words, the first $M-M'$ inequalities are un-activated and the last $M'$ inequalities are activated. The Shur-complement of $\Delta_{11}$ can now be written as
	\begin{equation}\label{eqn:proofgradcondition1}
		\begin{split}
			&\mathrm{Sc}\left(\Delta, \Delta_{11}\right)\\
			=&\begin{bmatrix}	
			\mathrm{diag}(\left[\bm{g}-\bm{b}\right]_{1:M-M'})& \bm{0}_{(M-M')\times (M-M')}\\
				-\left[\Delta_{21} \right]_{M-M'+1:M}\Delta_{11}^{-1}\left[\Delta_{12} \right]_{:,1:M-M'} & -\left[\Delta_{21} \right]_{M-M'+1:M}\Delta_{11}^{-1}\left[\Delta_{12} \right]_{:,M-M'+1:M}
			\end{bmatrix}.
		\end{split}
	\end{equation}
where $\bm{X}_{a:b}$ denotes the sub-matrix concatenated by rows of matrix $\bm{X}$ with indices $a,a+1,\cdots, b$,  $\bm{X}_{:,a:b}$ denotes the sub-matrix concatenated  columns of matrix $\bm{X}$ with indices $a,a+1,\cdots, b$. By Block matrix determinant lemma, we have
\begin{equation}
	\mathrm{det}\left(\mathrm{Sc}\left(\Delta, \Delta_{11}\right)\right) =\mathrm{det}\left(\mathrm{diag}(\left[\bm{g}-\bm{b}\right]_{1:M-M'}) \right) \mathrm{det}\left( -\left[\Delta_{21} \right]_{M-M'+1:M}\Delta_{11}^{-1}\left[\Delta_{12} \right]_{:,M-M'+1:M}\right).
	\end{equation}
We have $\mathrm{det}\left(\mathrm{diag}(\left[\bm{g}-\bm{b}\right]_{1:M-M'}) \right)\neq 0$ since $g_{m}<b_m$  for $m=1\cdots, M-M'$. Now it remains to prove $\left[\Delta_{21} \right]_{M-M'+1:M}\Delta_{11}^{-1}\left[\Delta_{12} \right]_{:,M-M'+1:M}$ has full rank. To do that, we can rewrite the matrix as
\begin{equation}
	\left[\Delta_{21} \right]_{M-M'+1:M}\Delta_{11}^{-1}\left[\Delta_{12} \right]_{:,M-M'+1:M}=\mathrm{diag}\left( \mu_{\mathrm{act}}\right)  \bigtriangledown_{x_t}\!\!\bm{g}_{\mathrm{act}} \Delta_{11}^{-1}\bigtriangledown_{x_t}\!\!\bm{g}^\top_{\mathrm{act}},
	\end{equation}
where $\mu_{\mathrm{act}}=\left[ \mu_{M-M'+1},\cdots, \mu_M\right] $ is a non-zero vector according to the condition, and $\bigtriangledown_{x_t}\bm{g}_{\mathrm{act}}=\left[\bigtriangledown_{x_t}\bm{g}\right]_{M-M'+1:M}$ is the gradient matrix regarding the activated constraint functions. By the condition that the gradients regarding the activated constraint functions are linear dependent and $d\geq M'$, we have $\mathrm{rank}(\bigtriangledown_{x_t}\bm{g}_{\mathrm{act}})=M'$. We have proved that $\Delta_{11}$ is invertible by our conditions, so we have $\mathrm{rank}(\Delta_{11}^{-1})=d$ and $\mathrm{rank}(\Delta_{11}^{-1/2})=d$ with $\mathrm{rank}(\Delta_{11}^{-1})=\mathrm{rank}(\Delta_{11}^{-1/2})\mathrm{rank}(\Delta_{11}^{-1/2})$. Since for a real matrix $\bm{X}$, $\mathrm{rank}(\bm{X}^\top\bm{X})=\mathrm{rank}(\bm{X})$, and a matrix and its Gram matrix have the same rank, we have
\begin{equation}
	\mathrm{rank}\left(\bigtriangledown_{x_t}\bm{g}_{\mathrm{act}} \Delta_{11}^{-1}\bigtriangledown_{x_t}\bm{g}^\top_{\mathrm{act}} \right) =\mathrm{rank}\left( \bigtriangledown_{x_t}\bm{g}_{\mathrm{act}} \Delta_{11}^{-1/2}\right) =M',
	\end{equation}
where the  second equality holds due to Sylvester's rank inequality such that $\mathrm{rank}\left( \bigtriangledown_{x_t}\bm{g}_{\mathrm{act}} \Delta_{11}^{-1/2}\right) \geq M'$ and the fact that $\mathrm{rank}\left( \bigtriangledown_{x_t}\bm{g}_{\mathrm{act}} \Delta_{11}^{-1/2}\right)\leq \min\left(\mathrm{rank}\left( \bigtriangledown_{x_t}\bm{g}_{\mathrm{act}}\right), \mathrm{rank}\left(\Delta_{11}^{-1/2}\right)  \right)=M' $. Since $\mu_{\mathrm{act}}>0$, we have
\begin{equation}
	\mathrm{det}\left(\mathrm{Sc}\left(\Delta, \Delta_{11}\right)\right)=\mathrm{det}\left(\mathrm{diag}(\left[\bm{g}-\bm{b}\right]_{1:M-M'}) \right)\mathrm{det}\left( \mathrm{diag}\left( \mu_{\mathrm{act}}\right) \right)  \mathrm{det}\left( \bigtriangledown_{x_t}\bm{g}_{\mathrm{act}} \Delta_{11}^{-1}\bigtriangledown_{x_t}\bm{g}^\top_{\mathrm{act}}\right) >0,
\end{equation}
Thus $\mathrm{Sc}\left(\Delta, \Delta_{11}\right) $ has full rank and is invertible, which completes the proof.

\end{proof} 
\section{Proofs of Results in Section \ref{sec:analysis}}\label{sec:generalization_proof}
\subsection{Proof of Theorem \ref{thm:generalization}}
\begin{lemma}[Sensitivity of the total loss.]\label{lma:Lipschitzofloss}
	If  the ML model
	$f_{\theta}$ is Lipschitz continuous with respect to the resource capacity input $\bm{B}$ and parameter input
	$\bm{c}$ are bounded, and the conditions in Proposition \ref{lma:condition_optlayer} are satisfied, then the loss $L(\bm{h}_\theta)$ is Lipschitz continuous with respect to the context sequence $\bm{c}$ and total budget $\bm{B}$, i.e.
	\[
	L(\pi_\theta,(\bm{c},\bm{B}))-L(\pi_\theta, (\bm{c}',\bm{B}'))\leq \Gamma_{L,c}\omega_c+\Gamma_{L,b}\omega_b ,
	\]
	where $\omega_c=\max_{\bm{c},\bm{c}'\in \mathbb{C}}\left\|\bm{c}-\bm{c}' \right\| $ is the size of the parameter space $\mathbb{C}$, $\omega_b=\max_{\bm{B},\bm{B}'\in \mathbb{B}}\left\|\bm{B}-\bm{B}' \right\| $ is the size of the resource capacity space $\mathbb{B}$, and $\Gamma_{L,c}$ and $\Gamma_{L,b}$ are bounded.
\end{lemma}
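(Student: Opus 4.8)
The plan is to propagate a perturbation of the episode data $(\bm{c},\bm{B})$ forward through the $N$ unrolled recurrences of Algorithm~\ref{alg:online_inference} and to bound its effect on each per-step loss $l(x_t,c_t)$. First I would fix two episodes $(\bm{c},\bm{B})$ and $(\bm{c}',\bm{B}')$ and, writing $\delta^b_t=\|\bm{b}_t-\bm{b}'_t\|$, $\delta^\lambda_t=\|\lambda_t-\lambda'_t\|$ and $\delta^x_t=\|x_t-x'_t\|$ for the corresponding internal signals, establish that every layer of the pipeline is Lipschitz on the (compact) reachable domain. The multiplier layer $\lambda_t=f_\theta(\bm{b}_t,c_t,\bar t)$ is Lipschitz by hypothesis, so $\delta^\lambda_t\le \Gamma_f(\delta^b_t+\|c_t-c'_t\|)$. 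For the optimization layer $x_t=p(c_t,\lambda_t,\bm{b}_t)$ of Eqn.~\eqref{eqn:optlayer}, the bounds on $\bigtriangledown_{\lambda_t}x_t$ and $\bigtriangledown_{\bm{b}_t}x_t$ from Proposition~\ref{lma:grad_optlayer} --- which are finite precisely because the conditions of Proposition~\ref{lma:condition_optlayer} make $\Delta_{11}$ and $\mathrm{Sc}(\Delta,\Delta_{11})$ invertible --- give local Lipschitz continuity in $(\lambda_t,\bm{b}_t)$; an analogous implicit-function argument on the same KKT system (or simply strong convexity of the relaxed objective together with smoothness of $l$ and $\bm{g}$ in $c_t$) yields Lipschitz continuity in $c_t$, so $\delta^x_t\le \Gamma_p(\delta^\lambda_t+\delta^b_t+\|c_t-c'_t\|)$. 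Finally the budget update $\bm{b}_{t+1}=\bm{b}_t-\bm{g}(x_t,c_t)$ is Lipschitz because $\bm{g}$ is twice continuously differentiable on the compact domain of reachable $(x_t,c_t)$, giving $\delta^b_{t+1}\le \delta^b_t+\Gamma_g(\delta^x_t+\|c_t-c'_t\|)$.

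Chaining these three inequalities eliminates $\delta^\lambda_t$ and $\delta^x_t$ and produces a scalar linear recursion $\delta^b_{t+1}\le a\,\delta^b_t+b\,\|\bm{c}-\bm{c}'\|$ with $\delta^b_1=\|\bm{B}-\bm{B}'\|$ and constants $a,b$ depending only on $\Gamma_f,\Gamma_p,\Gamma_g$ and $M$. Unrolling over the \emph{finite} horizon gives $\delta^b_t\le a^{t-1}\|\bm{B}-\bm{B}'\|+b\,\frac{a^{t-1}-1}{a-1}\|\bm{c}-\bm{c}'\|$, hence $\delta^x_t$ is bounded by a finite multiple of $\|\bm{B}-\bm{B}'\|$ and $\|\bm{c}-\bm{c}'\|$ for every $t\le N$. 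Then, using that $l$ is Lipschitz on the reachable compact set, $|l(x_t,c_t)-l(x'_t,c'_t)|\le \Gamma_l(\delta^x_t+\|c_t-c'_t\|)$, and summing over $t=1,\dots,N$ yields $|L(\pi_\theta,(\bm{c},\bm{B}))-L(\pi_\theta,(\bm{c}',\bm{B}'))|\le \Gamma_{L,c}\|\bm{c}-\bm{c}'\|+\Gamma_{L,b}\|\bm{B}-\bm{B}'\|$ with explicit finite $\Gamma_{L,c},\Gamma_{L,b}$; bounding the two norms by the diameters $\omega_c$ and $\omega_b$ finishes the proof. The explicit forms of $\Gamma_{L,c}$ and $\Gamma_{L,b}$ obtained this way are exactly what is needed for the last term in Theorem~\ref{thm:generalization}.

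The main obstacle I expect is twofold: (i) justifying that all the constants above are genuinely finite, which requires showing that the iterates $\bm{b}_t$ and $x_t$ --- and hence the KKT multipliers $\mu_t$ entering $\Delta_{11},\Delta_{12},\Delta_{21}$ --- stay in a fixed compact region independent of the particular episode, so that the continuous quantities in Proposition~\ref{lma:grad_optlayer} are uniformly bounded; the per-step feasibility constraint $\bm{g}(x_t,c_t)\le\bm{b}_t$ together with the assumed boundedness of the inputs is what makes this work. And (ii) obtaining Lipschitz dependence of the $\arg\min$ in Eqn.~\eqref{eqn:optlayer} on $c_t$, since $c_t$ appears both in the objective and in the feasible set; here the strong convexity assumption on $l$ or one of the $g_m$ is essential, as it rules out the non-uniqueness and discontinuity that can otherwise arise when the active set changes.
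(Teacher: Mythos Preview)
Your proposal is correct and follows essentially the same route as the paper: bound the Lipschitz constants of each layer of the unrolled pipeline (multiplier update, optimization layer via the KKT gradients of Proposition~\ref{lma:grad_optlayer}, budget update), chain them into a scalar linear recursion for $\delta^b_t$, unroll it over the finite horizon, and sum the per-step loss differences. The paper does exactly this, including the separate KKT-based derivation of $\bigtriangledown_{c_t}x_t$ that you anticipate in obstacle~(ii), and the explicit tracking of $\Gamma_{p,\lambda},\Gamma_{p,b},\Gamma_{p,c}$ (where you use a single $\Gamma_p$) to produce the closed-form constants $\Gamma_{L,c},\Gamma_{L,b}$.
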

\begin{proof}
	We first bound the Lipschitz constants of the optimization layer $p(c_t,\lambda_t,\bm{b}_t)$ with respect to $\lambda_t$,$\bm{b}_t$, and $c_t$. Taking $l_2$-norm for both sides of Eqn.~\eqref{eqn:gradopt_lambda} and using triangle inequality, we have
	\begin{equation}\label{eqn:boundgradlambda}
		\begin{split}
			\left\| \bigtriangledown_{\lambda_t}\!x_{t}\right\|_2 &\leq \left\| \Delta_{11}^{-1}\Delta_{12}\right\|_2 +\left\| \Delta_{11}^{-1}\Delta_{12}\mathrm{Sc}\left(\Delta, \Delta_{11}\right)^{-1}\Delta_{21}\Delta_{11}^{-1}\Delta_{12}\right\|_2 \\
			&\leq \left\| \Delta_{11}^{-1} \Delta_{12}\right\|_2+\left\| \Delta_{11}^{-1}\Delta_{12}\right\|_2 \left\| \mathrm{Sc}\left(\Delta, \Delta_{11}\right)^{-1}\right\|_2 \left\| \Delta_{21}\right\|_2 \left\| \Delta_{11}^{-1}\Delta_{12}\right\|_2,
		\end{split}
	\end{equation}
	where the second inequality holds because any induced operator norm is sub-multiplicative.
	
	Denote the Lipschitz constants for $l$ and $g_m$ for any $ m=1,\cdots, M$ are $\Gamma_l$ and $\Gamma_g$ respectively. Then we have $\left\| \bigtriangledown_{x_t}l\left( x_t, c_t\right)\right\|_2\leq \Gamma_{l,x}$ and $\left\| \bigtriangledown_{x_t}g_m\left( x_t, c_t\right)\right\|_2\leq \Gamma_{g,x}$ for any $ m=1,\cdots, M$. Thus we have $\left\| \Delta_{12}\right\|_F=\left\| \bigtriangledown_{x_{t}}\!\bm{g}\left(x_{t},\bm{c}_{t} \right)\right\|_F\leq \sqrt{M}\Gamma_{g,x} $ and $\left\| \Delta_{21}\right\|_F\leq \bar{\mu}\sqrt{M}\Gamma_{g,x}$ where $\bar{\mu}$ is the largest dual variable for any $m=1,\cdots M$ and $t=1,\cdots, N$. Also, we have the norm of Hessian matrices bounded as $0\leq\breve{\rho}_{l,x,x} \leq \left\|\bigtriangledown_{x_t x_t}\!l\left( x_t, c_t\right) \right\|_2\leq \bar{\rho}_{l,x,x}$ and $0 \leq \breve{\rho}_{g,x,x} \leq \left\| \bigtriangledown_{x_t x_t}g_m\left(x_{t},\bm{c}_{t} \right)\right\|_2\leq \bar{\rho}_{g,x,x}$ by the convexity of $l$ and convexity of $g_m$. Since either $l$ or one of $g_m, m=1,\cdots, M$ is strongly convex,  the least singular value of $\Delta_{11}$ is larger than or equal to $\breve{\rho}_{l,g}=\min\left\lbrace \breve{\rho}_{l,x,x}, \breve{\lambda}\breve{\rho}_{g,x,x}\right\rbrace $ where $\breve{\lambda}$ is the lower bounds of $\lambda_{m,t}$. Thus, we have $\left\| \Delta_{11}^{-1}\right\|_2 \leq \frac{1}{\breve{\rho}_{l,g}}$. Thus we have $\left\| \Delta_{11}^{-1}\Delta_{12}\right\|_2\leq \left\| \Delta_{11}^{-1}\right\|_2 \left\| \Delta_{12}\right\|_F\leq \sqrt{M} \frac{\Gamma_{g,x}}{\breve{\rho}_{l,g}}$.
	
	If the conditions in Proposition \ref{lma:condition_optlayer} are satisfied, by the proof of Proposition \ref{lma:condition_optlayer},  $\mathrm{Sc}\left(\Delta, \Delta_{11}\right)^{-1}$ has full rank and $\left\| \mathrm{Sc}\left(\Delta, \Delta_{11}\right)^{-1}\right\|_2$ is upper bounded.
	By Eqn.~\eqref{eqn:proofgradcondition1}, the smallest singular value of $\mathrm{Sc}\left(\Delta, \Delta_{11}\right)^{-1}$ is expressed as
	\begin{equation}
		\sigma_{\min}\left( \mathrm{Sc}\left(\Delta, \Delta_{11}\right)\right) =\min\left\lbrace \min \left\lbrace \left| g_m-b_m\right|,m\notin\mathcal{A} \right\rbrace,\sigma_{\min}\left(\left[\Delta_{21} \right]_{\mathcal{A}}\Delta_{11}^{-1}\left[\Delta_{12} \right]_{\mathcal{A}} \right)  \right\rbrace,
	\end{equation}
	where $\sigma_{\min}\left( \cdot\right) $ returns the smallest singular value of a matrix and $\bm{X}_{\mathcal{A}}$ is a sub-matrix of $\bm{X}$ with indices in $\mathcal{A}$. For ease of notation, we denote $\sigma_{p,\lambda}=\sigma_{\min}\left(\left[\Delta_{21} \right]_{\mathcal{A}}\Delta_{11}^{-1}\left[\Delta_{12} \right]_{\mathcal{A}} \right)$. Also, we denote $\mathrm{gap}(\bm{g},\bm{b})=\min \left\lbrace \left| g_m-b_m\right|,m=1,\cdots, M' \right\rbrace$. Then, we have
	\begin{equation}
		\left\| \mathrm{Sc}^{-1}\left(\Delta, \Delta_{11}\right)\right\|_2  =\sigma^{-1}_{\min}\left( \mathrm{Sc}\left(\Delta, \Delta_{11}\right)\right) =1/\min\left\lbrace \mathrm{gap}(\bm{g},\bm{b}), \sigma_{p,\lambda}\right\rbrace.
	\end{equation}
	Combining with inequality \eqref{eqn:boundgradlambda}, we have
	\begin{equation}
		\left\| \bigtriangledown_{\lambda_t}\!x_{t}\right\|_2 \leq \sqrt{M}\frac{\Gamma_{g,x}}{\breve{\rho}_{l,g}}+\frac{M^{3/2}\bar{\mu}\Gamma_{g,x}^3}{\breve{\rho}_{l,g}^2\min\left\lbrace d(\bm{g},\bm{b}), \sigma_{p,\lambda}\right\rbrace}.
	\end{equation}
	Similarly for the gradient of $p$ with respect to $\bm{b}_t$,
	And  the corresponding bound is
	\begin{equation}\label{eqn:gradopt_budget_bound}
		\begin{split}
			\left\| \bigtriangledown_{\bm{b}_t}x_{t}\right\|_2 &\leq \left\| \Delta_{11}^{-1}\Delta_{12}\right\|_2\left\|  \mathrm{Sc}\left(\Delta, \Delta_{11}\right)^{-1}\right\|_2 \left\| \mathrm{diag}(\mu_t)\right\|_2 \\
			&\leq \frac{\sqrt{M}\Gamma_{g,x}\bar{\mu}}{\breve{\rho}_{l,g}\min\left\lbrace d(\bm{g},\bm{b}), \sigma_{p,\lambda}\right\rbrace}.
		\end{split}
	\end{equation}
	And the gradient of $p$ with respect to $c_t$ can also be derived by KKT as
	\begin{equation}\label{eqn:gradopt_c}
		\begin{split}
			&\bigtriangledown_{c_t}\bm{x}_{t}=\Delta_{11}^{-1}\Delta_{12}\mathrm{Sc}\left(\Delta, \Delta_{11}\right)^{-1}\mathrm{diag}(\mu_t)\bigtriangledown_{c_t}\bm{g}(x_t,c_t)\\&
			-\!\left(\Delta_{11}^{-1}+\!\Delta_{11}^{-1}\Delta_{12}\mathrm{Sc}\left(\Delta, \Delta_{11}\right)^{-1}\!\!\Delta_{21}\Delta_{11}^{-1} \right)\!\! \left(\!\bigtriangledown_{x_{t} c_t}l(x_t,c_t) +\!\!\sum_{m=1}^M\left(  \lambda_{m,t} +\mu_{m,t}\right)\bigtriangledown_{x_t c_t}g_m\left(x_{t},\bm{c}_{t}\! \right) \right) .
		\end{split}
	\end{equation}
	And the corresponding bound is
	\begin{equation}\label{eqn:gradopt_c_bound}
		\begin{split}
			&\left\| \bigtriangledown_{c_t}\bm{x}_{t}\right\|_2 \\
			\leq &\left\|\Delta_{11}^{-1}+\Delta_{11}^{-1}\Delta_{12}\mathrm{Sc}\left(\Delta, \Delta_{11}\right)^{-1}\Delta_{21}\Delta_{11}^{-1} \right\|_2 \left(\bar{\rho}_{l,x,c}+M(\bar{\lambda}+\bar{\mu})\bar{\rho}_{g,x,c} \right)\\
			&+\left\| \Delta_{11}^{-1}\Delta_{12}\mathrm{Sc}\left(\Delta, \Delta_{11}\right)^{-1}\mathrm{diag}(\mu_t)\bigtriangledown_{c_t}\bm{g}(x_t,c_t)\right\|_2 \\
			\leq& \frac{\bar{\rho}_{l,x,c}+M(\bar{\lambda}+\bar{\mu})\bar{\rho}_{g,x,c}}{\breve{\rho}_{l,g}}+\frac{Mq^2_{g,x}\bar{\mu}\bar{\rho}_{l,x,c}+M^2q^2_{g,x}(\bar{\mu}\bar{\lambda}+\bar{\mu}^2)\bar{\rho}_{g,x,c}}{\breve{\rho}^2_{l,g}\min\left\lbrace d(\bm{g},\bm{b}), \sigma_{p,\lambda}\right\rbrace}+\frac{M\Gamma_{g,x}\bar{\mu}\Gamma_{g,c}}{\breve{\rho}_{l,g}\min\left\lbrace d(\bm{g},\bm{b}), \sigma_{p,\lambda}\right\rbrace},
		\end{split}
	\end{equation}
	where the first inequality holds by triangle inequality and the assumptions that $\left\| \bigtriangledown_{x_{t} c_t}l(x_t,c_t)\right\|\leq \bar{\rho}_{l,x,c} $, $\left\| \bigtriangledown_{x_t c_t}g_m\left(x_{t},\bm{c}_{t} \right) \right\|\leq \bar{\rho}_{g,x,c} $ and $\lambda_{m,t}\leq \bar{\lambda}, \mu_{m,t}\leq \bar{\mu}$.
	
	Therefore, we prove that the Lipschitz constants of the optimization layer is bounded and we denote the upper bounds of $\left\| \bigtriangledown_{\lambda_t}\!x_{t}\right\|_2$, $\left\| \bigtriangledown_{\bm{b}_t}\!x_{t}\right\|_2$, and $\left\| \bigtriangledown_{\bm{c}_t}\!x_{t}\right\|_2$ as $\Gamma_{p,\lambda}, \Gamma_{p,b}$ and $\Gamma_{p,c}$, respectively.  
	
	Next we prove that the sensitivity of the total loss are bounded.
	Let $x_{j,t},c_{j,t},\lambda_{j,t},\bm{b}_{j,t}$ be the values
	flowing through our recurrent architectures for the sample $(\bm{c}_j, \bm{B}_j)$  while let $x'_{j,t}, c'_{j,t},\lambda'_{j,t},\bm{b}'_{j,t}$ be the values for $(\bm{c}'_j, \bm{B}'_j)$. Let $\omega_c=\max_{\bm{c},\bm{c}'\in \mathbb{C}}\left\|\bm{c}-\bm{c}' \right\| $ is the size of the parameter space $\mathbb{C}$, $\omega_b=\max_{\bm{B},\bm{B}'\in \mathbb{B}}\left\|\bm{B}-\bm{B}' \right\| $ is the size of the resource capacity space $\mathbb{B}$. By Lipschitz continuity of the optimization layer, we have
	\begin{equation}\label{eqn:loss_sensitive}
		\begin{split}
			&L(\pi_\theta,(\bm{c},\bm{B}))-L(\pi_\theta, (\bm{c}',\bm{B}'))\\
			=&\sum_{t=1}^Nl(x_{j,t},c_{j,t})-l(x'_{j,t},c'_{j,t})\\
			\leq& N\Gamma_{l,c}\omega_c+\Gamma_{l,x}\sum_{t=1}^N\left\|p(c_{j,t},\lambda_{j,t},\bm{b}_{j,t})-p(c'_{j,t},\lambda'_{j,t},\bm{b}'_{j,t}) \right\|  \\
			\leq &N\Gamma_{l,c}\omega_c+\Gamma_{l,x}\sum_{t=1}^N\left\|p(c_{j,t},\lambda_{j,t},\bm{b}_{j,t})-p(c'_{j,t},\lambda_{j,t},\bm{b}_{j,t}) \right\|\\
			&+\left\|p(c'_{j,t},\lambda_{j,t},\bm{b}_{j,t})-p(c'_{j,t},\lambda'_{j,t},\bm{b}_{j,t}) \right\|+\left\|p(c'_{j,t},\lambda'_{j,t},\bm{b}_t)-p(c'_{j,t},\lambda'_{j,t},\bm{b}'_{j,t}) \right\| \\
			\leq &N\Gamma_{l,c}\omega_c+N\Gamma_{l,x}\Gamma_{p,c}\omega_c+\Gamma_{l,x}\Gamma_{p,\lambda}\sum_{t=1}^N\left\|\lambda_{j,t}-\lambda'_{j,t} \right\| +\Gamma_{l,x}\Gamma_{p,b}\sum_{t=1}^N\left\|\bm{b}_{j,t}-\bm{b}'_{j,t} \right\|
		\end{split}
	\end{equation}
	If the Lipschitz constant of  the neural network is $\Gamma_f$, we have
	\begin{equation}\label{eqn:lambda_inequality_recur}
		\left\|\lambda_{j,t}-\lambda'_{j,t} \right\|\leq\Gamma_{f,b} \left\|\bm{b}_t-\bm{b}'_t \right\| +\Gamma_{f,c}\omega_c.
	\end{equation}
	By the expression of the remaining budget in Line \ref{alg:updating} of Algorithm \ref{alg:online_inference} and Lipchitz continuity of each layer, we have
	\begin{equation}\label{eqn:budget_inequaltiy_recur}
		\begin{split}
			\left\|\bm{b}_{j,t}-\bm{b}'_{j,t} \right\|&\leq \left\|\bm{b}_{j,{t-1}}-\bm{b}'_{j,{t-1}} \right\| +\left\| \bm{g}(x_{t-1},c_{t-1})-\bm{g}(x'_{t-1},c'_{t-1})\right\| \\
			&\leq \left\|\bm{b}_{j,{t-1}}-\bm{b}'_{j,{t-1}} \right\| +\Gamma_{g,x}\left\|x_{t-1}-x'_{t-1} \right\| +\Gamma_{g,c}\omega_c\\
			&\leq \left(1+\Gamma_{g,x}\Gamma_{p,b} \right) \left\|\bm{b}_{j,{t-1}}-\bm{b}'_{j,{t-1}} \right\|+\Gamma_{g,x}\Gamma_{p,\lambda}\left\|\lambda_{t-1}-\lambda'_{t-1} \right\|+\left(1+\Gamma_{p,c} \right) \omega_c\\
			&\leq \left(1+\Gamma_{g,x}\Gamma_{p,b}+\Gamma_{g,x}\Gamma_{p,\lambda}\Gamma_{f,b} \right) \left\|\bm{b}_{j,{t-1}}-\bm{b}'_{j,{t-1}} \right\|+\left(1+\Gamma_{p,c}+\Gamma_{g,x}\Gamma_{p,\lambda}\Gamma_{f,c} \right) \omega_c.
		\end{split}
	\end{equation}
	By performing induction based on recurrent inequality \eqref{eqn:budget_inequaltiy_recur}, we have
	\begin{equation}\label{eqn:budget_inequaltiy}
		\begin{split}
			&\left\|\bm{b}_{j,t}-\bm{b}'_{j,t} \right\|
			\leq \left(1+\Gamma_{g,x}\Gamma_{p,b}+\Gamma_{g,x}\Gamma_{p,\lambda}\Gamma_{f,b} \right)^{t-1} \omega_b +\\
			&\qquad \frac{\left(1+\Gamma_{p,c}+\Gamma_{g,x}\Gamma_{p,\lambda}\Gamma_{f,c} \right) \left(\left(1+\Gamma_{g,x}\Gamma_{p,b}+\Gamma_{g,x}\Gamma_{p,\lambda}\Gamma_{f,b} \right)^{t-1}-1 \right)}{\Gamma_{g,x}\Gamma_{p,b}+\Gamma_{g,x}\Gamma_{p,\lambda}\Gamma_{f,b}} \omega_c.
		\end{split}
	\end{equation}
	Combining with \eqref{eqn:lambda_inequality_recur}, we have
	\begin{equation}\label{eqn:lambda_inequality}
		\begin{split}
			&\left\|\lambda_{j,t}-\lambda'_{j,t} \right\|\leq\Gamma_{f,b}\left(1+\Gamma_{g,x}\Gamma_{p,b}+\Gamma_{g,x}\Gamma_{p,\lambda}\Gamma_{f,b} \right)^{t-1} \omega_b  \\
			&\qquad+\left( \Gamma_{p,b}\frac{\left(1+\Gamma_{p,c}+\Gamma_{g,x}\Gamma_{p,\lambda}\Gamma_{f,c} \right) \left(\left(1+\Gamma_{g,x}\Gamma_{p,b}+\Gamma_{g,x}\Gamma_{p,\lambda}\Gamma_{f,b} \right)^{t-1}-1 \right)}{\Gamma_{g,x}\Gamma_{p,b}+\Gamma_{g,x}\Gamma_{p,\lambda}\Gamma_{f,b}}+\Gamma_{f,c}\right) \omega_c.
		\end{split}
	\end{equation}
	Therefore, the loss difference in Eqn.\eqref{eqn:loss_sensitive} can be bounded as
	\begin{equation}\label{eqn:loss_sensitive_bound}
		\begin{split}
			&L(\pi_\theta,(\bm{c},\bm{B}))-L(\pi_\theta, (\bm{c}',\bm{B}'))\\
			\leq &N\Gamma_{l,c}\omega_c+N\Gamma_{l,x}\Gamma_{p,c}\omega_c+\Gamma_{l,x}\Gamma_{p,\lambda}\sum_{t=1}^N\left\|\lambda_{j,t}-\lambda'_{j,t} \right\| +\Gamma_{l,x}\Gamma_{p,b}\sum_{t=1}^N\left\|\bm{b}_{j,t}-\bm{b}'_{j,t} \right\| \\
			\leq&\omega_c \left( \frac{\Gamma_{l,x}(\Gamma_{p,\lambda}+1)N\Gamma_{p,b}\left(1+\Gamma_{p,c}+\Gamma_{g,x}\Gamma_{p,\lambda}\Gamma_{f,c} \right) \left(\left(1+\Gamma_{g,x}\Gamma_{p,b}+\Gamma_{g,x}\Gamma_{p,\lambda}\Gamma_{f,b} \right)^{t-1}-1 \right)}{\Gamma_{g,x}\Gamma_{p,b}+\Gamma_{g,x}\Gamma_{p,\lambda}\Gamma_{f,b}}\right.\\
			&\left.+N\Gamma_{l,c}+N\Gamma_{l,x}\Gamma_{p,c}+\Gamma_{l,x}\Gamma_{p,\lambda}N\Gamma_{f,c}\right)\\
			&+\omega_b\Gamma_{l,x}\left(N\Gamma_{l,x}\Gamma_{p,\lambda}\Gamma_{f,b}+N\Gamma_{p,b} \right)\left(1+\Gamma_{g,x}\Gamma_{p,b}+\Gamma_{g,x}\Gamma_{p,\lambda}\Gamma_{f,b} \right)^{t-1} \\
			\leq &\Gamma_{L,c}\omega_c+\Gamma_{L,b}\omega_b.
		\end{split}
	\end{equation}
\end{proof}

\textbf{Proof of Theorem \ref{thm:generalization}}
\begin{proof}
	If Lemma~\ref{lma:Lipschitzofloss} holds, we can use the generalization theorem for Rademacher complexity \cite{Rademacher_risk_bounds_bartlett2002rademacher,generalization_bounds_robust_attias2019improved} to bound the expected loss, which states that  for any $\theta\in \Theta$, with probability at least $1-\delta, \delta\in(0,1)$,
	\begin{equation}\label{eqn:generalization_rad}
		\begin{split}
			&\mathrm{E}\left[ L(\bm{h}_{\theta},\bm{c})\right] \\
			\leq&L\left(\bm{h}_{\theta},S\right) +2R_n(L\circ \mathbb{H})+\left( \Gamma_{L,c}\omega_c\!+\!\Gamma_{L,b}\omega_b\right) \!\sqrt{\frac{\ln(1/\delta)}{n}}.
		\end{split}
	\end{equation}
	For the learned weight $\hat{\theta}$, we have with probability at least $1-\delta, \delta\in(0,1)$,
	\begin{equation}\label{eqn:boundempiricalloss}
		\begin{split}
			L\left(\bm{h}_{\hat{\theta}},S\right)&=L\left(\bm{h}_{\hat{\theta}},S\right)-L\left(\bm{h}_{\hat{\theta}^*},S\right)+L\left(\bm{h}_{\hat{\theta}^*},S\right)\\
			&\leq \mathcal{E}\left(\bm{h}_{\hat{\theta}},S\right)  +L\left(\bm{h}_{\theta^*},S\right)\\
			&\leq \mathcal{E}\left(\bm{h}_{\hat{\theta}},S\right) +\mathrm{E}\left[ L(\bm{h}_{\theta^*})\right] +2R_n(L \circ\mathbb{H})\\&+\left( \Gamma_{L,c}\omega_c+\Gamma_{L,b}\omega_b\right) \sqrt{\frac{\ln(1/\delta)}{n}},
		\end{split}
	\end{equation}
	where the first inequality holds because $\hat{\theta}^*$ minimizes the empirical loss $L\left(\bm{h}_{\hat{\theta}^*},S\right)$, and the second inequality holds by applying the generalization bound \eqref{eqn:generalization_rad} to $L\left(\bm{h}_{\hat{\theta}^*},S\right)$. Combining Eqn.~\eqref{eqn:generalization_rad} and Eqn.~\eqref{eqn:boundempiricalloss} and applying union bound, we get
	the bound of expected loss 
	in
	Theorem~\ref{thm:generalization}.
\end{proof}

\subsection{Proof of Proposition~\ref{lma:rad_linear}}\label{sec:concreteML}
\begin{definition}[Rademacher Complexity]\label{def:rademacher}
	Let $\mathbb{H}=\left\lbrace \bm{h}: \mathbb{B}\times \mathbb{C}\rightarrow \mathbb{X}\right\rbrace $, where $\mathbb{B}, \mathbb{C}$, and $\mathbb{X}$ are the spaces of $\bm{B},\bm{c}$,  and $\bm{x}=[x_1,\cdots, x_N]^\top$ respectively, be the function space of online optimizer and denote the space of the total loss as $L\circ \mathbb{H}=\left\lbrace L(\bm{h}), \bm{h}\in\mathbb{H}\right\rbrace $. Given the dataset $S=\left\lbrace \left(\bm{c}_1,\bm{B}_1 \right),\cdots,  \left(\bm{c}_n,\bm{B}_n\right) \right\rbrace $, the Rademacher complexity regarding the space of total loss is
	\[
	R_n(L\circ\mathbb{H})=\mathrm{E}_{S}\mathrm{E}_{\nu}\left[\sup_{\bm{h}\in\mathbb{H}}\left(\frac{1}{n}\sum_{i=1}^n\nu_iL\left( \bm{h}(\bm{B}_i,\bm{c}_i)\right) \right) \right],
	\]
	where $\nu_1,\cdots, \nu_n$ are independently drawn from Rademacher distribution.
\end{definition}
\textbf{Proof of Proposition~\ref{lma:rad_linear}}
\begin{proof}
	We consider two concrete examples for the ML model --- linear
	and neural network models --- used in our unrolled
	architecture shown in Fig.~\ref{fig:illustration}. We start with a linear model to substantiate the ML model in \ouralg. Specifically, we have
	\begin{equation}\label{eqn:linearmodel}
		f_{\theta}(\bm{v})=\theta^\top\phi(\bm{v})
	\end{equation}
	where $\theta\in\Theta=\left\lbrace \theta\in\mathbb{R}^{M\times q}\mid, \left\| \theta\right\|_2\leq Z \right\rbrace $ is the weight to be learned, $\bm{v}=[\bm{b}_t^\top,c_t,\bar{t}]^\top$, $\phi: \mathbb{B}\times \mathbb{C}\times \mathbb{R}^{1}\rightarrow \mathbb{R}^q$ is a determinant feature mapping function which can be a manually-designed kernel or a pre-trained neural network.
	
	According to our architecture in Fig.~\ref{fig:illustration}, the final online optimizer $\bm{h}$ is a composition of the optimizer $p$ and the ML model, i.e. $\bm{h}=p\circ \bm{f}_{\theta}$ where $\bm{f}_{\theta}=[f_{\theta}(\bm{b}_1,c_1),\cdots, f_{\theta}(\bm{b}_t,c_
	t)]$ and $p$ is applied to each entry of  $\bm{f}_{\theta}$. By \eqref{eqn:loss_sensitive}, \eqref{eqn:lambda_inequality_recur}, and \eqref{eqn:budget_inequaltiy_recur}, for two different outputs $\lambda_1$ and $\lambda_1'$ of the neural network for $t=1$ and the corresponding action $x_t$, $x_t'$, we have
	\begin{equation}
		l(x_t,c_t)-l(x_t',c_t)\leq \Upsilon_t\|\lambda_1-\lambda_1'\|,
	\end{equation}
	where $\Upsilon_t=\Gamma_{l,x}\left(\Gamma_{p,\lambda}\Gamma_{f,b}+\Gamma_{l,x}\Gamma_{p,b}\right)\left(1+\Gamma_{g,x}\Gamma_{p,b}+\Gamma_{g,x}\Gamma_{p,\lambda}\Gamma_{f,b}\right)^{t-2}\Gamma_{g,x}\Gamma_{p,\lambda}$.
	According to the contraction lemma of Rademacher complexity in
	\cite{vector_radmacher_maurer2016vector}, 
	we have
	\begin{equation}\label{eqn:rad_loss_decom}
		R_n(L\circ \mathbb{H})\leq N \sum_{t=1}^N \Upsilon_t\sum_{m=1}^MR_n(\mathbb{F}_{1,m}),
	\end{equation}
	where $\mathbb{F}_{1,m}=\left\lbrace \bm{f}_{\theta,1,m}: \mathbb{B}\times \mathbb{C}\rightarrow \Lambda^1 \right\rbrace $ is the function domain regarding the $m$th dimension of $\lambda_t$ output by the linear ML model.
	
	Now we derive the Rademacher complexity of the ML model spaces $\mathbb{F}_{1,m}$. Let $\bm{v}_{i,t}=[\bm{b}_{t,i}^\top,c_{t,i},\bar{t}]^\top, i=1,\cdots, n$. Assume that $\|\theta_m\|_2= Z_m$ According to the definition of empirical Radmacher complexity, we have
	\begin{equation}\label{eqn:rad_f_entry}
		\begin{split}
			\hat{R}_n(\mathbb{F}_{1,m})&=\mathrm{E}_{\nu}\left[ \sup_{\bm{f}_{\theta,t,m}\in\mathbb{F}_{1,m}}\left( \frac{1}{n}\sum_{i=1}^n\nu_i\bm{f}_{\theta,t,m}(\bm{B}_i,\bm{c}_i)\right) \right] \\
			&=\mathrm{E}_{\nu}\left[ \sup_{\bm{f}_{\theta,t,m}\in\mathbb{F}_{1,m}}\left( \frac{1}{n}\sum_{i=1}^n\nu_i\theta_{m}^\top\phi(\bm{v}_{i,t})\right) \right] \\
			&=\frac{1}{n}\mathrm{E}_{\nu}\left[  Z_m\left\|\sum_{i=1}^n\nu_i\phi(\bm{v}_{i,t})\right\|_2\right] \\
			&\leq \frac{Z_m}{n} \sqrt{\mathrm{E}_{\nu}\left\|\sum_{i=1}^n\nu_i\phi(\bm{v}_{i,t})\right\|^2_2} \\
			&=\frac{Z_m}{n} \sqrt{\sum_{i=1}^n\left\|\phi(\bm{v}_{i,t})\right\|^2_2} \leq \frac{Z_mW}{\sqrt{n}},
		\end{split}
	\end{equation}
	where the third equality holds by Cauchy-Schwartz inequality, the first inequality holds by using Jensen's inequality for the expectation, the forth equality holds since $\mathrm{E}\left[\nu_i\nu_j \right]=0 $ when $i\neq j$ and the last inequality by the assumption that the norm of the feature mapping is at most $W$. Taking expectation over the dataset $S$, we have
	\begin{equation}\label{eqn:rad_f_entry_expected}
		R_n(\mathbb{F}_{1,m})=\mathrm{E}_S\left[ \hat{R}_n(\mathbb{F}_{1,m})\right] \leq \frac{Z_mW}{\sqrt{n}}
	\end{equation}
	
	Substituting inequality \eqref{eqn:rad_f_entry_expected} into \eqref{eqn:rad_loss_decom}, since $\sum_{m=1}^M\|\theta_m\|_2\leq \sqrt{M}\|\theta\|, $we get the conclusion
	\begin{equation}
		R_n(L\circ \mathbb{H})\leq \sqrt{M}N^2\Upsilon_N\frac{ZW}{\sqrt{n}}
	\end{equation}

	Next, We consider a fully neural network with $K$ hidden layers, each with the number of neurons no larger than $u$, which can be represented as
	\begin{equation}\label{eqn:neural_net}
		f_{\theta}(\bm{v})=\alpha_K\left( \theta_K\alpha_{K-1}\left(\theta_{K-1}\cdots\alpha_1\left(\theta_1\bm{v} \right)  \right) \right) ,
	\end{equation}
	where $\alpha_k(\cdot)$, for $k=1,\cdots, K$, is the activation function and $\theta_k$,
	for $k=1,\cdots, K$ is the weight for $k$-th layer.
	
	Same as Eqn.\eqref{eqn:rad_loss_decom}, we have
	\begin{equation}\label{eqn:rad_loss_decom_nn}
		R_n(L\circ \mathbb{H})\leq N \sum_{t=1}^N \Upsilon_t\sum_{m=1}^MR_n(\mathbb{F}_{1,m}),
	\end{equation}
	where now $\mathbb{F}_{1,m}=\left\lbrace \bm{f}_{\theta,t,m}: \mathbb{B}\times \mathbb{C}\rightarrow \Lambda^1 \right\rbrace $ is the function domain regarding the $m$-th dimension of $\lambda_t$ output by the neural network.
	
	Denote $\bm{v}_{i,t}=[\bm{b}_{t,i}^\top,c_{t,i},\bar{t}]^\top, i=1,\cdots, n$ and $\bm{V}=[\bm{v}_{1,t};\bm{v}_{2,t},\cdots, \bm{v}_{n,t}]$. Denote the convering number of the space $\mathbb{F}_{1,m}$ as $\mathcal{N}(\mathbb{F}_{1,m},\epsilon,2)$ with respect to Scale $\epsilon$ and 2-norm which is defined in 
	\cite{spectral_norm_rademacher_bartlett2017spectrally}
	The Lipschitz constants of all the activation functions $\alpha_k$ are less than or equal to $\Gamma_{\alpha}$. Assume that the spectral norm of the weight in each layer satisfy $\left\|\theta_k\right\|_2\leq Z_k, k=1,\cdots, K$. By Theorem 3.3 in 
	\cite{spectral_norm_rademacher_bartlett2017spectrally},
	the logarithm of the covering number $\mathcal{N}(\mathbb{F}_{1,m})$ of the space $\mathcal{N}(\mathbb{F}_{1,m},\epsilon,2)$ can be bounded as
	\begin{equation}
		\ln\mathcal{N}(\mathbb{F}_{1,m},\epsilon,2)\leq \frac{\|\bm{V}\|_F^2}{\epsilon^2}\left(\prod_{k=1}^{K}Z_k^2\Gamma_{\sigma}^2 \right)\left( \sum_{k=1}^K\left(\frac{\left\| \left( \theta_k-\bar{\theta}_k\right)^\top \right\|^{2/3}_{2,1} }{Z_k^{2/3}} \right) \right) ^3,
	\end{equation}
	where $\bar{\theta}_k,k=1,\cdots, K$ are reference matrices with the same dimensions as $\theta_k$. We choose reference matrices with the same spectral norm bound as $\theta_k$, and simplify the logarithm of the covering number as
	\begin{equation}\label{eqn:covering_number}
		\ln\mathcal{N}(\mathbb{F}_{1,m},\epsilon,2)\leq \frac{4K^3u^2\|\bm{V}\|_F^2}{\epsilon^2}\left(\prod_{k=1}^{K}Z_k^2\Gamma_{\sigma}^2 \right),
	\end{equation}
	where the inequality holds because $\left\| \left( \theta_k-\bar{\theta}_k\right)^\top \right\|_{2,1}\leq \left\| \theta^\top_k\right\|_{2,1}+\left\|\bar{\theta}^\top _k\right\|_{2,1}\leq \sqrt{u} \left( \left\| \theta^\top_k\right\|_{F}+\left\|\bar{\theta}^\top _k\right\|_{F}\right) \leq u\left( \left\| \theta^\top_k\right\|_{2}+\left\|\bar{\theta}^\top _k\right\|_{2}\right) \leq 2uZ_k$.
	
	Substituting \eqref{eqn:covering_number} into Dudley entropy Integral, we have
	\begin{equation}
		\begin{split}
			\hat{R}_n(\mathbb{F}_{1,m})&\leq \inf_{\gamma>0}\left(\frac{4\gamma}{\sqrt{n}}+\frac{12}{n}\int_{\gamma}^{\sqrt{n}}\sqrt{\ln\mathcal{N}(\mathbb{F}_{1,m},\epsilon,2)}\mathrm{d}\epsilon \right) \\
			&\leq \frac{24K^{3/2}u\|\bm{V}\|_F\Gamma_{\alpha}\prod_{k=1}^{K}Z_k}{n}\left(1+\ln\frac{n}{72K^{3/2}u\|\bm{V}\|^{1/2}\Gamma_{\alpha}\prod_{k=1}^{K}Z_k} \right) \\
			&\leq \tilde{O}\left(\frac{K^{3/2}u\Gamma_{\alpha}(\beta_b+\beta_c)\prod_{k=1}^{K}Z_k}{\sqrt{n}} \right) ,
		\end{split}
	\end{equation}
	where the last inequality holds because $\|\bm{V}\|_F\leq \sqrt{n}\left(\beta_b+\beta_c \right) $ where $\beta_b, \beta_c$ are the largest norm of budget $\bm{B}$ and parameter $c$, respectively.Taking expectation over the dataset $S$, we have
	\begin{equation}\label{eqn:rad_elementnn_expected}
		R_n(\mathbb{F}_{1,m})=\mathrm{E}_S\left[ \hat{R}_n(\mathbb{F}_{1,m})\right] \leq \tilde{O}\left(\frac{K^{3/2}u\Gamma_{\alpha}(\beta_b+\beta_c)\prod_{k=1}^{K}Z_k}{\sqrt{n}} \right).
	\end{equation}
	
	Substituting inequality \eqref{eqn:rad_elementnn_expected} into \eqref{eqn:rad_loss_decom_nn}, we get the conclusion
	\begin{equation}
		R_n(L\circ \mathbb{H})\leq \sqrt{M}N^2\Upsilon_N\tilde{O}\left(\frac{K^{3/2}u\Gamma_{\alpha}(\beta_b+\beta_c)\prod_{k=1}^{K}Z_k}{\sqrt{n}} \right).
	\end{equation}
\end{proof}

\subsection{Proof of Proposition~\ref{thm:generalization_sgd}}

\begin{proof}
By Lemma 4.4 in \cite{optimization_learning_bottou2018optimization},
with the assumptions about training objective and instance sequence held, we have
\begin{equation}
\begin{split}
    &\mathrm{E}\left[L(h_{\hat{\theta}_{i+1}})-L(h_{\hat{\theta}_{i}})\right]\\
    \leq&  -\left(\iota-\frac{1}{2}\bar{\alpha}\Gamma_{\triangledown L, \theta}\left(\varpi_V+\iota_G^2\right)\right)\bar{\alpha}\left[\left\|\bigtriangledown_{\hat{\theta}}\mathrm{E}[L(h_{\hat{\theta}_{i}})]\right\|_2^2\right]+\frac{1}{2}\bar{\alpha}^2\Gamma_{\triangledown L, \theta}\varpi\\
    \leq & -\iota\bar{\alpha}\varsigma\mathrm{E}\left[ L(h_{\hat{\theta}_{i+1}})-L(h_{\theta^{*}})\right]+\frac{1}{2}\bar{\alpha}^2\Gamma_{\triangledown L, \theta}\varpi,
\end{split}
\end{equation}
where the expectation is taken over the randomness of the parameter sequence up to round $i$, the last inequality holds by the choice of $\bar{\alpha}$ and the assumption of Polyak-Lojasiewicz inequality. Subtracting $\mathrm{E}\left[ L\left(h_{\theta^*}\right)\right]$ from both sides, we have
\begin{equation}
\begin{split}
    &\mathrm{E}\left[L(h_{\hat{\theta}_{i+1}})-L(h_{\theta^*})\right]
    \leq \left(1-\iota\bar{\alpha}\varsigma\right)\mathrm{E}\left[ L(h_{\hat{\theta}_{i+1}})-L(h_{\theta^{*}})\right]+\frac{1}{2}\bar{\alpha}^2\Gamma_{\triangledown L, \theta}\varpi,
    \end{split}
\end{equation}
and so 
\begin{equation}\label{eqn:induction}
\begin{split}
    &\mathrm{E}\left[L(h_{\hat{\theta}_{i}})-L(h_{\theta^*})\right]-\frac{\bar{\alpha}\Gamma_{\triangledown L, \theta}\varpi}{2\iota\varsigma}\\
    \leq& \left(1-\iota\bar{\alpha}\varsigma\right)\mathrm{E}\left[ L(h_{\hat{\theta}_{i-1}})-L(h_{\theta^{*}})\right]+\frac{1}{2}\bar{\alpha}^2\Gamma_{\triangledown L, \theta}\varpi-\frac{\bar{\alpha}\Gamma_{\triangledown L, \theta}\varpi}{2\iota\varsigma}\\
    =&\left(1-\iota\bar{\alpha}\varsigma\right)\left(\mathrm{E}\left[L(h_{\hat{\theta}_{i-1}})-L(h_{\theta^*})\right]-\frac{\bar{\alpha}\Gamma_{\triangledown L, \theta}\varpi}{2\iota\varsigma}\right).
    \end{split}
\end{equation}
Thus, using inequality \eqref{eqn:induction} recurrently, we have
\begin{equation}
\begin{split}
    \mathrm{E}\left[L(h_{\hat{\theta}_{i}})-L(h_{\theta^*})\right]-\frac{\bar{\alpha}\Gamma_{\triangledown L, \theta}\varpi}{2\iota\varsigma}
    \leq \left(1-\iota\bar{\alpha}\varsigma\right)^i\left(\mathrm{E}\left[L(h_{\hat{\theta}_{0}})-L(h_{\theta^*})\right]-\frac{\bar{\alpha}\Gamma_{\triangledown L, \theta}\varpi}{2\iota\varsigma}\right).
    \end{split}
\end{equation}
Thus, we can bound the expected cost gap at round $i$ as 
\begin{equation}
\begin{split}
   \mathrm{E}\left[L(h_{\hat{\theta}_{i}})-L(h_{\theta^*})\right]\leq \frac{\bar{\alpha}\Gamma_{\triangledown L, \theta}\varpi}{2\iota\varsigma}
    +O\left(\left(1-\iota\bar{\alpha}\varsigma\right)^i\right),
    \end{split}
\end{equation}
where the expectation is taken over the parameter sequence at and before round $i$. 
\end{proof}

\end{document}